\documentclass[11pt]{article}

\usepackage[margin=1.05in]{geometry}
\usepackage[T1]{fontenc}
\usepackage[utf8]{inputenc}
\usepackage{lmodern}
\usepackage{microtype}

\usepackage{amsmath,amssymb,amsthm}
\usepackage{mathtools}

\usepackage{graphicx}
\usepackage{booktabs}
\usepackage[font=small,labelfont=bf]{caption}
\usepackage{placeins}

\usepackage[round,authoryear]{natbib}
\usepackage[colorlinks=true,linkcolor=blue!60!black,citecolor=blue!60!black,
            urlcolor=blue!60!black]{hyperref}
\hypersetup{
  pdftitle={Learning Suffers More Than the Policy Class Under Partial
            Observability: A Closed-Form Analysis},
  pdfauthor={Idil Gozel},
}

\theoremstyle{plain}
\newtheorem{theorem}{Theorem}
\newtheorem{proposition}{Proposition}
\newtheorem{lemma}{Lemma}
\newtheorem{corollary}{Corollary}
\newtheorem{conjecture}{Conjecture}
\theoremstyle{definition}

\theoremstyle{remark}

\newcommand{\keq}{k_{\mathrm{eq}}}
\newcommand{\kstar}{k^{*}}
\newcommand{\Jopt}{J^{*}}
\newcommand{\Jml}{J_{\mathrm{ml}}}
\newcommand{\E}{\mathbb{E}}
\newcommand{\dd}{\mathrm{d}}
\newcommand{\qt}{\tilde q}

\usepackage{xcolor}

\title{Learning Suffers More Than the Policy Class\\[2pt]
       Under Partial Observability:\\[6pt]
       \large A Closed-Form Analysis\vspace{4pt}}
\author{Idil G\"ozel\\[3pt]
  \normalsize Department of Physics and Astronomy\\
  \normalsize University College London, London, UK\\[1pt]
  \normalsize \texttt{ucapgoz@ucl.ac.uk}}
\date{}

\begin{document}
\maketitle

\begin{abstract}
When a reinforcement learning agent cannot observe the full state, we usually
blame its policies: it cannot see enough to represent a good one. We show that
in a solvable case the bigger problem lies elsewhere. Even when a good policy
is available and the agent's value function is expressive enough to describe
it exactly, learning still ends up somewhere far worse.

We study a partially observed linear-quadratic problem in which a standard
actor-critic learner can be solved in closed form. At our default setting the
best policy the agent can represent is already close to optimal, costing
$10.4\%$ more than the ideal controller that observes everything. Learning
does not find it. The algorithm instead comes to rest at a policy that is
$35\%$ worse than the best one available to it, and we can say exactly where
and why.

The cause is a bias in what the critic learns rather than a limit on what the
actor can express. Because the agent cannot attribute what it sees to the
part of the state it cannot observe, the critic misreads that unexplained
variation as sharp curvature in its own value estimates, and the actor follows
that error away from the optimum. We derive closed-form expressions for the
resulting policy, for its cost, and for the one design choice that removes the
problem, which is how far the learner looks ahead before trusting its own
value estimates. Deep reinforcement learning experiments follow these
predictions closely. Notably, giving the agent memory of past observations
does not help, while changing how far it looks ahead does.
\end{abstract}

\section{Introduction}\label{sec:intro}                     

An agent that cannot see the whole state can fail for two different reasons.
It may be unable to represent a good policy, because the actions worth taking
depend on what it cannot observe. Or it may be able to represent a good
policy and still fail to find it, because the quantities it estimates while
learning are biased by the same blindness. The first failure is a property of
the policy class. The second is a property of the learning rule.

We will call the first the \emph{policy-class gap}: how much worse the best
policy the agent can represent is than the best policy that sees everything.
We will call the second the \emph{learning gap}: how much worse the policy
the learner actually settles on is than the best one it could have
represented. These two names are used in exactly this sense throughout the
paper.

The usual reading of partial observability is that the policy-class gap is
the problem. Give the agent a richer policy class, or a way to remember the
past, and the difficulty should recede. That reading is incomplete, and the
reason is easy to state. Learning also has to \emph{estimate} something,
usually a value function, and those estimates are formed from the same
impoverished observations. If the estimates are biased, the learner will
optimize the wrong objective no matter how good its policy class is. Existing
theory has largely bounded this second effect rather than located it, so it
has been hard to tell how much of the observed shortfall it accounts for.

This paper answers that question exactly in one solvable case. An agent
observes and actuates one coordinate of a linear system. A second coordinate,
which it never observes, drives the first. Costs are quadratic, so if the
agent could see both coordinates the problem would be classical. The learner
is a standard actor-critic pair, with a critic at its temporal-difference
fixed point and an actor following the policy gradient.

Two features of this setup make the two gaps separable, and both are
deliberate. First, the policy class is fixed and its optimum can be computed,
so the policy-class gap is known exactly rather than estimated. Second, the
critic's features are the correct ones: the true value functions here are
quadratic, and the critic uses quadratic features. Nothing is lost to
function approximation. What remains is the effect of the features being
computed from an incomplete observation, and nothing else.

The answer is that the learning gap dominates. At our default setting the
best memoryless policy is already close to optimal, costing only $10.4\%$
more than the ideal controller. Nevertheless, actor-critic learning comes to
rest somewhere else entirely: at a gain fifteen times larger, whose cost is
$35\%$ above the best memoryless policy. Both numbers are exact rather than
estimated. Why learning fails even though a good policy exists, and is
representable, is the question this paper answers.

\paragraph{Why the critic goes wrong.} The mechanism is worth stating before
any mathematics, because it is simple. Consider what the agent's observed
coordinate looks like in the long run. It fluctuates, and those fluctuations
have two sources: the control the agent applies, and the unobserved
coordinate pushing it around. In the stationary regime, meaning once the loop
has settled into its long-run statistical behaviour, these two exactly
balance. On average the unobserved coordinate supplies precisely the motion
that the control removes.

Now put yourself in the critic's position. It sees persistent fluctuation in
the observation, and it must express the value of a state as a function of
that observation alone. It cannot say ``this variation comes from a variable I
cannot see,'' because it has no term for that variable. With quadratic
features, the only place left to put unexplained persistence is the curvature
of the value function. So the critic reports value that curves far too
sharply. At the optimum of the policy class it reports curvature $16.7$
against a true value of $1.11$. An actor using this critic as its baseline is
told that moving to higher gain pays, and it keeps moving until the inflated
curvature is finally balanced by control cost, long past the optimum.

The key point is that nothing here is a failure of representation. The
critic's feature class can express the true value function exactly. The bias
comes from what the features are allowed to see.

\paragraph{Contributions.} Each item below is stated as one idea. Scope and
technical qualifications are given where the result appears.

\begin{itemize}\itemsep3pt
  \item \emph{An exact mechanism} (\S\ref{sec:mechanism}). We compute the
    critic's fixed point and the actor's expected update in closed form, and
    show the resulting equilibrium exists only because time is discrete.
  \item \emph{Where learning stops} (\S\ref{sec:law}). One elementary
    formula gives the gain at which learning comes to rest, as a function of
    a single ratio of exploration to disturbance strength.
  \item \emph{What it costs} (\S\ref{sec:cost}). At that gain the deployed
    cost has a simple closed form, and it holds for essentially any
    disturbance process, with no distributional assumption.
  \item \emph{Which of these is universal} (\S\ref{sec:scope}). The cost is
    distribution-free but the location is not. Five processes with identical
    autocovariance move the resting point by a factor of $3.4$ while leaving
    its cost nearly unchanged.
  \item \emph{The design choice that fixes it} (\S\ref{sec:lambda}). How far
    the learner looks ahead before trusting its own value estimates, and
    nothing else, moves the resting point back to the optimum.
  \item \emph{Evidence from deep reinforcement learning}
    (\S\ref{sec:memory}, \S\ref{sec:experiments}). Runs follow the
    closed-form predictions, architecture does not matter, and a
    pre-registered test rules out the obvious alternative explanation.
\end{itemize}

Two things this paper does not claim. It does not prove that the stochastic
iterates converge to the fixed point; every closed-form statement concerns
the expected update, and the correspondence with the iterates is verified
numerically. It does not claim that adding memory to an architecture is
useless in general, only that in this record it changed nothing that the
lookahead parameter did not already change. Section~\ref{sec:limitations}
states both carefully, together with a reproducible behaviour that the theory
does not explain.

\section{Setting: a partially observed linear--quadratic problem}
\label{sec:model}                                          

\subsection{Environment}\label{sec:env}

We study the simplest control problem in which an agent must act on a
coordinate it observes while a coordinate it does not observe carries the
dynamics' memory. Simplicity here is the point rather than a limitation. It
is what lets both gaps be computed exactly instead of estimated, and two of
the three laws we derive turn out to hold far beyond this instance. The environment has a two-dimensional state $(v,z)$: the
agent observes and actuates only $v$, while an unobserved state $z$ drives it,
\begin{equation}\label{eq:sde}
  \dd v = (z + u)\,\dd t, \qquad
  \dd z = (-\gamma z - c\,v)\,\dd t + \sigma\,\dd W, \qquad
  \sigma^2 = 2\Theta\gamma c ,
\end{equation}
with $u$ the control input and $W$ a standard Wiener process. Marginalizing the unobserved state is what makes this a partially observed problem of the classical kind. The observed coordinate then obeys a generalized Langevin equation in which $z$ acts through a memory kernel $K(\tau) = c\,e^{-\gamma\tau}$, together with a stationary noise whose strength $\sigma^2 = 2\Theta\gamma c$ makes the stationary variance of $z$ equal to $\Theta c$. In the general $N$-component version the kernel is $K(\tau)=\sum_{i} c_i e^{-\gamma_i \tau}$ with decay rates $\{\gamma_i\}$. This projection of an unobserved degree of freedom onto a memory
kernel and a matched stationary noise is classical
\citep{zwanzig1961memory,mori1965transport}. We write
$K(0)=\sum_i c_i$ for the kernel's zero-lag intensity. Every observation-based
policy therefore faces state aliasing \citep{singh1994learning}: distinct
values of $z$ are indistinguishable from the current observation alone.

The agent interacts in discrete time with period $h$: it observes
$v_n = v(nh)$ and applies a control held constant over the interval
(zero-order hold). The cost rate is $q v^2 + r u^2$, discounted at rate
$\rho$. Throughout, the \emph{default instance} is
$c=0.3$, $\gamma=1$, $\Theta=1$, $q=r=1$, $h=0.05$, $\rho=0.05$; every claim
is stated for general parameters and verified numerically at this instance.

\paragraph{Notation.} We collect here the symbols used throughout.
$\beta \coloneqq e^{-\rho h}$ is the per-step discount factor. A prime
denotes the next sample: $v' \equiv v_{n+1}$. For a policy with gain $k$ we
write $\kappa \coloneqq kh$, $\qt \coloneqq q + rk^2$, and
$\nu \coloneqq h s_e$, where $s_e$ is the exploration variance defined below.
The second moments of the stationary closed loop are
$m_2 \coloneqq \E[v^2]$ and $x \coloneqq \E[v v']$.

\subsection{Learning algorithm}\label{sec:algo}

The behaviour policy during learning is linear with Gaussian exploration,
\begin{equation}\label{eq:policy}
  u_n = -k\,v_n + \eta_n, \qquad \eta_n \sim \mathcal N(0, s_e)\ \text{i.i.d.},
\end{equation}
and deployment sets $s_e = 0$. The learner is a standard actor-critic pair
\citep{konda2003actorcritic}. The critic is the TD(0)/LSTD fixed point \citep{sutton1988td,bradtke1996lstd} over the observable features $(1, v, v^2)$. This is the \emph{correct} approximation class here: the true value functions of the problem are quadratic, so the choice of features introduces no error. The only limitation is what the features can see. The actor
follows the score-function policy gradient with the critic as baseline
\citep{williams1992reinforce}. In the stationary closed loop all variables are jointly Gaussian. By the Isserlis moment factorization \citep{isserlis1918formula}, both the critic's fixed point and the actor's expected update therefore reduce to polynomials in $m_2$ and $x$.

\subsection{Two reference values}\label{sec:refs}

Two quantities calibrate everything that follows, and both are exact for
every instance of \eqref{eq:sde}. The optimal LQG cost $\Jopt$ is
attained by Kalman filtering and certainty-equivalent control
\citep{kalman1960new,astrom1965optimal}: it is the best any causal
controller can do. The \emph{best memoryless} cost $\Jml$, attained at gain $\kstar$, is the best achievable by any deployment policy of the form $u=-kv$. Policies of this form are called memoryless, or in control terms static output feedback \citep{fatkhullin2021optimizing}. This cost is the optimum \emph{of the policy class the learner searches}.
At the default instance,
\begin{equation}\label{eq:refs}
  \Jopt = 0.1989, \qquad \Jml = 0.2196 \ \text{at}\ \kstar = 1.649,
\end{equation}
a policy-class gap of $10.4\%$. We verified separately that this gap is not
amplified by finite horizons, initial-condition effects, or risk-sensitive
objectives. Whatever the learning dynamics do beyond it is therefore not a representation effect. The distinction between $\Jml$ and what learning
actually attains is the subject of this paper.

\subsection{An exact sampling reduction}\label{sec:reduction}

The zero-order hold makes one step of the sampled dynamics available in
closed form, with no approximation.

\begin{lemma}[Exact one-step reduction]\label{lem:reduction}
Under the policy \eqref{eq:policy}, for any $h>0$ and any realization of the
unobserved state,
\begin{equation}\label{eq:ar1}
  v_{n+1} \;=\; (1-\kappa)\,v_n \;+\; h\,\bar z_n \;+\; h\,\eta_n ,
  \qquad
  \bar z_n \coloneqq \frac{1}{h}\int_{nh}^{(n+1)h} z_t \,\dd t .
\end{equation}
\end{lemma}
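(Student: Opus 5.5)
The plan is to integrate the first equation of \eqref{eq:sde} directly over one sampling interval. The observed coordinate satisfies $\dd v = (z+u)\,\dd t$ with no diffusion term, so $v$ is absolutely continuous and obeys the pathwise identity
\[
v\bigl((n+1)h\bigr) - v(nh) = \int_{nh}^{(n+1)h} z_t\,\dd t + \int_{nh}^{(n+1)h} u_t\,\dd t .
\]
This holds for every realization of the unobserved state, because the Wiener process enters only the $z$ equation. Whatever path $z$ takes, it appears in the $v$ equation only through its time integral, and no Itô correction arises.

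Next, I substitute the zero-order-hold control. Under \eqref{eq:policy}, $u_t = -k v_n + \eta_n$ is constant for $t\in[nh,(n+1)h)$, so its integral over the interval is $h(-k v_n + \eta_n)$. By the definition of $\bar z_n$ in the statement, the $z$ integral equals $h\bar z_n$. Collecting terms gives
\[
v_{n+1} = v_n - kh\,v_n + h\bar z_n + h\eta_n ,
\]
and since $\kappa = kh$ by the paper's notation, this is exactly \eqref{eq:ar1}.

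There is no real obstacle. The only point needing care is to state why the identity is exact rather than an Euler approximation. The control is piecewise constant by construction, and the coupling $-c\,v$ in the $z$ equation does not feed back into $v$ within the interval except through $z$ itself. That feedback is absorbed entirely into $\bar z_n$, which is left unevaluated. This is why the lemma says ``any realization'': the dependence of $\bar z_n$ on $v$ and on the noise matters only later, when second moments are computed, and not for the one-step identity itself.
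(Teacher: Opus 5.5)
Your proposal is correct and is essentially the paper's proof: integrate $\dd v=(z+u)\,\dd t$ over one sampling interval with $u$ held at $-k v_n+\eta_n$, and note that the identity is exact pathwise because $v$ has no diffusion term. Your observation that the back-coupling $-c\,v$ is absorbed into the unevaluated $\bar z_n$ is a correct refinement. It is also why the paper's later one-step response $g$ differs from $h$ only at $O(h^3)$.
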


\begin{proof}
Integrate $\dd v = (z+u)\dd t$ over $[nh,(n+1)h)$ with $u$ held at
$u_n = -k v_n + \eta_n$. The observed coordinate carries no direct noise, so
the integral is exact.
\end{proof}

Equation \eqref{eq:ar1} says the sampled observation is a linear recursion
driven by the interval average of the unobserved state and the exploration noise. Its second-moment consequences require no distributional assumptions at all, a fact we exploit in \S\ref{sec:cost}. Gaussianity enters only through the recursion's interaction with the critic (\S\ref{sec:scope}). The stability condition of the recursion, $\kappa \in (0,2)$, is the sampled-control stability condition. The boundary $\kappa \to 2$ is worth keeping in mind: it reappears as the edge of every law we derive.

\section{The aliased critic and its consequence for the actor}
\label{sec:mechanism}                                       

\subsection{Stationarity pins the observed mean reversion}

The first structural fact is a constraint that stationarity alone places on
what any observer of $v$ can see. Write $\Delta \coloneqq v' - v$ for the
sampled increment.

\begin{lemma}[Observed mean reversion]\label{lem:unitroot}
In the stationary closed loop, for any gain $k$ and any environment
parameters, $\E[v\Delta] = -\tfrac12 \E[\Delta^2]$. Evaluated for the model
\eqref{eq:sde}, exactly at every $h$,
\begin{equation}\label{eq:unitroot}
  m_2 - x \;=\; \frac{\nu h}{2}\,\Bigl(1-\frac{\kappa}{2}\Bigr)^{-1}.
\end{equation}
\end{lemma}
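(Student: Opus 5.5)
The first identity follows from stationarity alone. Since $v'=v+\Delta$ and $\E[v'^2]=\E[v^2]$ in the stationary closed loop, expanding gives $\E[v^2]+2\E[v\Delta]+\E[\Delta^2]=\E[v^2]$, hence $\E[v\Delta]=-\tfrac12\E[\Delta^2]$. No dynamics or distributional assumption enters, which is why the statement holds for every gain and every environment. Since $\E[v\Delta]=x-m_2$, the remaining work is to show $\E[\Delta^2]=\nu h\,(1-\kappa/2)^{-1}$ for the model \eqref{eq:sde}.

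For the second part I would start from Lemma~\ref{lem:reduction}, writing $\Delta=-\kappa v+h\bar z+h\eta$ with $\eta$ independent of $(v,\bar z)$. Then
\[
\E[v\Delta]=-\kappa m_2+h\E[v\bar z], \qquad \E[\Delta^2]=\E[(h\bar z-\kappa v)^2]+h^2 s_e .
\]
The displayed formula contains no $z$-statistics, so the plan is to find a second relation that eliminates them. I would obtain it from the continuous-time path inside one hold interval. Because $v$ carries no diffusion, $\tfrac{\dd}{\dd t}\E[v_t^2]=2\E[v_t z_t]+2\E[v_t u_n]$. The closed loop is periodically stationary with period $h$, so integrating over $[nh,(n+1)h)$ gives zero on the left. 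This yields
\[
\int_{nh}^{(n+1)h}\E[v_t z_t]\,\dd t=-\int_{nh}^{(n+1)h}\E[v_t u_n]\,\dd t .
\]
To evaluate the right side I would substitute $v_t=v_n+\int_{nh}^t(z_s+u_n)\,\dd s$ and $u_n=-kv_n+\eta_n$. The exploration enters only through $\E[v_t\eta_n]=(t-nh)s_e$. Integrating this term produces the factor $h^2 s_e/2=\nu h/2$, and I expect it to be the source of the right-hand side of \eqref{eq:unitroot}. The remaining terms should combine with the cross terms in $\E[\Delta^2]$ and with $\E[v\bar z]$, leaving the factor $(1-\kappa/2)$, consistent with the stability boundary $\kappa\to 2$.

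The main obstacle is this cancellation. The left integral $\int\E[v_tz_t]\,\dd t$ is not obviously the same object as $h\E[v_n\bar z_n]$ or $\E[\bar z^2]$, because $z$ and $v$ co-evolve within the interval. If the bookkeeping does not close directly, my first fallback is brute force. The sampled pair $(v_n,z_n)$ is an exactly discretized linear Gauss--Markov system with input $\eta_n$. I would augment it with the interval integrals needed for $\bar z_n$, solve the discrete Lyapunov equation for its stationary covariance, and verify symbolically that $m_2-x$ reduces to $\tfrac{\nu h}{2}(1-\kappa/2)^{-1}$ independently of $c,\gamma,\Theta$. I would also check the formula numerically at the default instance.
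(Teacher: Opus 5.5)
The first part of your proposal is correct and matches the paper's argument. The second part does not go through, for two reasons.

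First, the ``second relation'' you extract from the hold interval is not new. Integrating $\frac{\dd}{\dd t}\E[v_t^2]=2\E[v_t(z_t+u_n)]$ over $[nh,(n+1)h)$ just gives $\E[v_{n+1}^2]-\E[v_n^2]=0$. That is the same stationarity equation $2\E[v\Delta]+\E[\Delta^2]=0$ you already used. Substituting $v_t=v_n+\int_{nh}^t z_s\,\dd s+(t-nh)u_n$ only rewrites it, so you have nothing with which to eliminate $\E[v_n\bar z_n]$ and $\E[\bar z_n^2]$. There is also a smaller error. In \eqref{eq:sde} itself, $\eta_n$ is not independent of $\bar z_n$: $z$ is driven by $-cv_t$, and $v_t$ contains $(t-nh)\eta_n$, so $\E[\bar z_n\eta_n]=-ch^2s_e/6+O(h^3)$. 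Your formulas for $\E[\Delta^2]$ and $\E[v_t\eta_n]$ are therefore exact only in the exogenous-disturbance variant.

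Second, and decisively, the cancellation you are counting on does not occur, because \eqref{eq:unitroot} is not an identity at finite $h$. The stationary covariance solves a Lyapunov equation forced by $s_eGG^{\top}+\Sigma_d$, so $m_2-x$ is affine in $s_e$. Its intercept is $\tfrac12\E[\Delta^2]\big|_{s_e=0}\ge\tfrac{h^2}{2}\E[\mathrm{Var}(\bar z_n\mid\mathcal F_{nh})]>0$, because the interval average carries fresh Wiener noise. The right side of \eqref{eq:unitroot} has zero intercept, so the two cannot agree identically. In the exogenous variant, with $\bar R(m)=\E[\bar z_0\bar z_m]$, Lemma~\ref{lem:reduction} and the variance balance of $v_n$ give exactly
\[
  m_2-x=\frac{h^2}{2-\kappa}\Bigl[s_e+\bar R(0)-\kappa\sum_{j\ge0}(1-\kappa)^j\bar R(j+1)\Bigr].
\]
The exploration part is exactly \eqref{eq:unitroot}. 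The disturbance part, however, equals $\tfrac{2-\kappa}{2}\E\bigl[(\bar z_n-\kappa\sum_{j\ge0}(1-\kappa)^j\bar z_{n-1-j})^2\bigr]>0$. At $\kappa=1$ it is $\bar R(0)-\bar R(1)\approx\tfrac23\gamma h\,\Theta c$, roughly a tenth of $s_e$ at the default instance. The paper's own deployed residual $\E[\Delta^2]/h^2\approx0.019$ at $s_e=0$ (Appendix~B) is this term. Your symbolic fallback would therefore expose $c,\gamma,\Theta$ dependence rather than confirm the formula.

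What can be proved is the leading-order statement $m_2-x=\frac{\nu h}{2}(1-\kappa/2)^{-1}+O(h^3)$ as $h\to0$ at fixed $\kappa$. In the exogenous variant this follows from the display above, whose disturbance bracket is $O(\gamma h)$ for an exponential kernel. For \eqref{eq:sde} it follows from \eqref{eq:MX}, where $M-X=\kappa M-\Theta K(0)/\kappa=s_e/(2-\kappa)$. The paper's Appendix~A proof is also only leading order, since it uses $g^2s_e=h^2s_e+O(h^3)$ and $(\Sigma_d)_{11}=O(h^3)$. Its balance $2(m_2-x)=g^2s_e+(\Sigma_d)_{11}$ also drops the term $\bigl((F_k-I)\Sigma(F_k-I)^{\top}\bigr)_{11}$, and that term is what supplies the factor $(1-\kappa/2)^{-1}$.
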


The first statement is immediate from $\E[v'^2]=\E[v^2]$ and requires no
distributional assumptions; the evaluation \eqref{eq:unitroot} is proved in
Appendix~A. Its content is easy to state. In stationarity the unobserved state resupplies, on average, exactly the motion that the control removes. The autocovariance decay of the \emph{observed} coordinate is therefore set by the exploration noise alone, and it carries almost no trace of either the control effort or the unobserved dynamics. A critic restricted to functions of $v$
must attribute the resulting persistence somewhere, and a quadratic feature
class offers only one place to put it: the curvature of the value function.

\subsection{The critic's fixed point}

Before computing anything, it is worth predicting what should happen. The
critic must assign value using functions of $v$ alone. Lemma~\ref{lem:unitroot}
says the observed coordinate looks almost like a random walk, because the
control and the unobserved state cancel on average. A near-random walk visits
large values of $v$ and stays there, which to any observer looks like a state
whose value differs sharply from the mean. With features $(1,v,v^2)$ the only
coefficient that can express such a difference is the one on $v^2$. We should
therefore expect the fitted curvature to come out far too large, and the
following proposition says by how much.

\begin{proposition}[Critic fixed point, exact at every $h$]\label{prop:critic}
The TD(0)/LSTD fixed point over $(1, v, v^2)$ has first-order coefficient
zero and curvature coefficient
\begin{equation}\label{eq:theta2}
  \theta_2 \;=\; \frac{h\,\qt\, m_2^2}{m_2^2 - \beta x^2}
  \;\xrightarrow[\;h\to0\;]{}\; \frac{\qt\, m_2}{\nu + \rho\, m_2},
  \qquad\text{versus}\qquad
  \theta_2^{\mathrm{cond}} \;\approx\; \frac{\qt}{2k+\rho},
\end{equation}
where $\theta_2^{\mathrm{cond}}$ is the curvature of the true conditional
value function $\E[V \mid v]$.
\end{proposition}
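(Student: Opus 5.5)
The plan is to write the LSTD normal equations explicitly and remove the intercept by centering, so that only the $v^2$ coefficient remains. Two facts do most of the work: Isserlis factorization of the stationary Gaussian loop, and Lemma~\ref{lem:unitroot}.

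\emph{Step 1 (normal equations and reward).} Let $\phi(v)=(1,v,v^2)^\top$. The TD(0)/LSTD fixed point $\theta$ solves $\E[\phi(v)(c_n+\beta\phi(v')^\top\theta-\phi(v)^\top\theta)]=0$ under the stationary behaviour distribution. Here $c_n$ is the per-step cost, which I take as $h(qv^2+ru_n^2)$ to leading order in the zero-order hold. Conditioning on $v_n$ with $u_n=-kv_n+\eta_n$ gives $\E[c_n\mid v_n]=h(\qt v_n^2+rs_e)$.

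\emph{Step 2 (parity).} The stationary loop \eqref{eq:ar1} is a zero-mean jointly Gaussian process invariant under $(v,z,\eta)\mapsto-(v,z,\eta)$, so all odd moments vanish. The $v$-row of the normal equations then involves only $\theta_1$, multiplied by $\beta x-m_2\neq0$ for $\kappa\in(0,2)$, and has zero source term. Hence $\theta_1=0$.

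\emph{Step 3 (centering).} The constant row is
\[
h(\qt m_2+rs_e)+(\beta-1)\theta_0+\theta_2(\beta-1)m_2=0,
\]
using $\E[v'^2]=m_2$. The $v^2$ row is
\[
h(\qt\E v^4+rs_em_2)+(\beta-1)\theta_0m_2+\theta_2(\beta\E[v^2v'^2]-\E v^4)=0.
\]
Subtracting $m_2$ times the first row from the second eliminates $\theta_0$ and the exploration constant. This leaves
\[
h\qt\,\mathrm{Var}(v^2)+\theta_2\bigl(\beta\,\mathrm{Cov}(v^2,v'^2)-\mathrm{Var}(v^2)\bigr)=0 .
\]
By Isserlis, $\mathrm{Var}(v^2)=2m_2^2$ and $\mathrm{Cov}(v^2,v'^2)=2x^2$. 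This gives exactly $\theta_2=h\qt m_2^2/(m_2^2-\beta x^2)$ at every $h$.

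\emph{Step 4 (small-$h$ limit).} I will write
\[
m_2^2-\beta x^2=(1-\beta)m_2^2+\beta(m_2-x)(m_2+x).
\]
Then I insert $1-\beta=\rho h+O(h^2)$, $m_2-x=\tfrac{\nu h}{2}(1+O(h))$ from \eqref{eq:unitroot}, and $m_2+x=2m_2+O(h)$. The denominator becomes $h\,m_2(\rho m_2+\nu)+O(h^2)$. This requires $m_2$ to stay bounded and nonzero as $h\to0$ with $\nu=hs_e$ fixed, and I would check this from the stationary second moments of \eqref{eq:ar1}. Dividing yields the stated limit $\qt m_2/(\nu+\rho m_2)$.

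\emph{Step 5 (the comparison value).} For $\theta_2^{\mathrm{cond}}$, I compute the true value of the continuous closed loop under $u=-kv$. That value is $V(v,z)=(v,z)P(v,z)^\top+\mathrm{const}$, where $P$ solves the discounted Lyapunov equation $(A_k-\tfrac\rho2I)^\top P+P(A_k-\tfrac\rho2I)+\mathrm{diag}(\qt,0)=0$ with $A_k=\begin{pmatrix}-k&1\\-c&-\gamma\end{pmatrix}$. Projecting onto $v$ with the stationary Gaussian regression $\E[z\mid v]=(\E[vz]/m_2)\,v$ gives the curvature of $\E[V\mid v]$. Its leading part is $P_{vv}$, and the $vv$-entry of the Lyapunov equation gives $P_{vv}(2k+\rho)=\qt-2cP_{vz}$. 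The cross terms involving $c$ are small at the default instance, which yields $\theta_2^{\mathrm{cond}}\approx\qt/(2k+\rho)$.

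I expect this last step to be the main obstacle. The statement is only an approximation, so I would present it as the leading term in $c$ and confirm the size of the correction numerically at the default instance. Steps 1--4, by contrast, are exact algebra once the centering trick in Step 3 is in place.
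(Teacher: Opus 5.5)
Your Steps 1--3 are the paper's proof. Parity gives $\theta_1=0$. Centring the $v^2$ row against the constant row removes $\theta_0$ and the $rs_e$ term, and Isserlis gives $\mathrm{Var}(v^2)=2m_2^2$ and $\mathrm{Cov}(v^2,v'^2)=2x^2$.

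Two small repairs are needed there. First, define the per-step cost as $c_n=h(qv_n^2+ru_n^2)$, as the paper does, rather than as a leading-order approximation. With the interval-integrated cost, the centred source term involves $\E[v_nv_t]$ for $t$ inside the interval, and the formula stops being exact at every $h$. Second, note that $m_2^2-\beta x^2>0$ (from $|x|\le m_2$ and $\beta<1$) before you divide.

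Step 4, which the paper leaves implicit, is correct in the regime you chose. I would obtain $m_2-x$ directly rather than from the exact form of \eqref{eq:unitroot}. Stationarity gives $m_2-x=\tfrac12\E[\Delta^2]$, and Lemma~\ref{lem:reduction} gives $\Delta=h(\bar z_n-kv_n+\eta_n)$. Hence $m_2-x=\tfrac{\nu h}{2}+\tfrac{h^2}{2}\E[(\bar z_n-kv_n)^2]$ up to higher order. The second term is negligible only because you hold $\nu$ fixed. At fixed $s_e$ it is of the same order as $\nu h$. At the default instance this is why the exact formula gives $16.7$ while $\qt m_2/(\nu+\rho m_2)$ evaluates to about $30$.

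The genuine gap is Step 5. The curvature of $\E[V\mid v]$ is $P_{vv}+2bP_{vz}+b^2P_{zz}$ with $b=\E[vz]/m_2$, and the extra terms are not ``cross terms involving $c$''.

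First, the $P$ entries do not vanish with $c$. The $(1,2)$ and $(2,2)$ entries of your Lyapunov equation give $(k+\gamma+\rho)P_{vz}=P_{vv}-cP_{zz}$ and $(2\gamma+\rho)P_{zz}=2P_{vz}$. So $P_{vz}$ and $P_{zz}$ are of order $P_{vv}$ even at $c=0$, because $z$ drives $v$ with unit coefficient.

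Second, $b$ is not small. The continuous-time form of Lemma~\ref{lem:unitroot}, $2(\E[vz]-km_2)+\nu=0$, gives $b=k-\nu/(2m_2)\approx k$ whenever the disturbance dominates the observed variance.

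At the default instance at $\kstar$ one finds $P_{vv}\approx1.05$, $P_{vz}\approx0.35$, $P_{zz}\approx0.34$ and $b\approx1.61$. The curvature of $\E[V\mid v]$ is therefore about $3$, not $1.11$, so the numerical confirmation you plan would refute the approximation rather than confirm it.

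What $\qt/(2k+\rho)$ does approximate, via your $(1,1)$ equation, is $P_{vv}$ itself to leading order in $c$, i.e.\ the curvature in $v$ at fixed $z$. It approximates the curvature of $\E[V\mid v]$ only in the exploration-dominated regime where $b\to0$.

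The paper's proof does not derive this reference value at all; its argument consists of your Steps 2--3. To support the comparison you must either adopt that narrower meaning of $\theta_2^{\mathrm{cond}}$ or restrict the regime. As written, the ``small cross terms'' argument cannot be completed at the default instance.
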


At the default instance, at $k=\kstar$, the critic reports $\theta_2 = 16.7$ against a true conditional curvature of $1.11$: a fifteen-fold inflation (Figure~\ref{fig:mechanism}b), verified against an independent solve to $1.5\times10^{-13}$. The denominator of \eqref{eq:theta2} is where Lemma~\ref{lem:unitroot} acts. The quantity $m_2^2-\beta x^2$ is small precisely because stationarity forces $x$ close to $m_2$. It is kept away from zero only by the exploration term $\nu$ and the discount
$\rho$.

The key point is that this inflation is not an approximation error. The
feature class can represent the true value function exactly. Persistence that
the critic cannot attribute to the unobserved state has nowhere else to go
but curvature in $v$.

\begin{figure}[!tb]
  \centering
  \includegraphics[width=\textwidth]{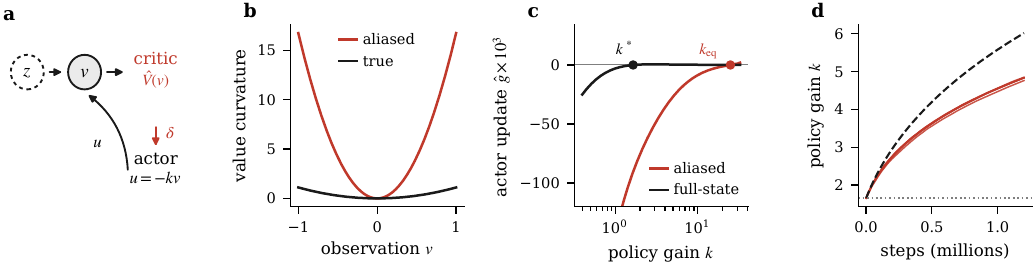}
  \caption{\textbf{An aliased critic inflates value curvature, and the actor follows
  the resulting update field away from the optimum.}
  \textbf{(a)} An unobserved state $z$ drives the observed coordinate $v$.
  The critic sees only $v$, so its temporal-difference signal $\delta$ is
  computed from an aliased state.
  \textbf{(b)} At the class optimum the critic reports curvature
  $\theta_2=16.7$ (red) against a true value of $1.11$ (black).
  \textbf{(c)} The expected update $\hat g$ (red) stays negative until
  $\keq=24.79$. A critic given the full state follows the true gradient
  (black, rescaled), which vanishes at $\kstar$.
  \textbf{(d)} Online actor-critic started at $\kstar$, six seeds (red),
  against the averaged dynamics (black dashed). All six leave the optimum,
  reaching $k=4.82\pm0.03$ at $1.2$M steps. The runs lag the averaged
  trajectory by ${\sim}20\%$ (\S\ref{sec:limitations}). Dotted line: $\kstar$.}
  \label{fig:mechanism}
\end{figure}

\subsection{The actor's expected update}

The actor performs stochastic gradient steps
$k \leftarrow k - \alpha\, \delta_n\, \partial_k \log \pi_k(u_n \mid v_n)$,
where $\delta_n$ is the temporal-difference error under the critic of
Proposition~\ref{prop:critic} and
$\partial_k \log \pi_k(u_n\mid v_n) = -\eta_n v_n / s_e$ for the policy
\eqref{eq:policy}. Let $\hat g(k)$ denote the expectation of the update
direction $\delta_n(-\eta_n v_n/s_e)$ in the stationary loop.

\begin{proposition}[Expected update]\label{prop:update}
Exactly at every $h$,
\begin{equation}\label{eq:ghat}
  \hat g(k) \;=\; 2hrk\, m_2 \;-\; 2\beta\,\theta_2\, x\, g ,
\end{equation}
where $g$ is the one-step control response
$g = \int_0^h \bigl[\partial v_t/\partial u\bigr]\,\dd t = h + O(h^3)$.
Substituting Proposition~\ref{prop:critic} and passing to the limit
$h \to 0$ at fixed $k$ (in which $x \to m_2$ and $g \to h$), the stationarity
condition $\hat g = 0$ reduces to $rk = \beta\,\theta_2(k)$, which has
\emph{no solution for any} $k>0$: in continuous time the expected update
never changes sign.
\end{proposition}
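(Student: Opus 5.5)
The plan is to compute $\hat g(k)=\E[\delta_n(-\eta_n v_n/s_e)]$ by writing out the temporal-difference error and evaluating each term with Lemma~\ref{lem:reduction}. By Proposition~\ref{prop:critic} the critic is $\hat V(v)=\theta_0+\theta_2 v^2$, so
\[
\delta_n = h\,(q v_n^2 + r u_n^2) + \beta\,\hat V(v_{n+1}) - \hat V(v_n),
\]
with the per-step cost taken as $h$ times the rate; the zero-order-hold correction to the running cost over the interval would only enter through terms uncorrelated with $\eta_n$ at the relevant order. Multiplying by $-\eta_n v_n/s_e$ and taking the stationary expectation, every term that does not contain $\eta_n$ vanishes. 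The reason is that $\eta_n$ is independent of $v_n$, of $\bar z_n$ (which is determined by the past and by future innovations of $W$), and of everything else, and it has mean zero. So only the following terms survive.

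\emph{The cost term.} Here $r u_n^2 = r(k^2v_n^2 - 2kv_n\eta_n + \eta_n^2)$. The cross term gives $h r\,\E[-2k v_n\eta_n\cdot(-\eta_n v_n)]/s_e = 2hrk\,m_2$. The $\eta_n^2$ and $\eta_n^3$-type terms vanish by odd symmetry in $v_n$ (the stationary law is symmetric) or by $\E[\eta_n^3]=0$.

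\emph{The bootstrap term.} Here $\beta\theta_2 v_{n+1}^2$ is handled with the exact control response. By Lemma~\ref{lem:reduction}, $v_{n+1} = A_n + g\,\eta_n$, where $A_n$ is independent of $\eta_n$. I will use $g$ rather than $h$ so as to account for the feedback of the held control through $z$ within the interval: $\partial v_t/\partial u$ integrates to $h+O(h^3)$ once the $-cv$ coupling in $z$ is included, with $\bar z_n$ taken as its $\eta$-independent part plus that response. Then $\E[v_{n+1}^2\eta_n v_n] = 2g\,s_e\,\E[A_n v_n] + O(\E[\eta^3])$. Since $\E[A_n v_n]=\E[v_{n+1}v_n]=x$, the term contributes $-2\beta\theta_2 x g$. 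This gives \eqref{eq:ghat}.

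For the limit, substitute $\theta_2$ from Proposition~\ref{prop:critic}. As $h\to0$ at fixed $k$ we have $x\to m_2$ (Lemma~\ref{lem:unitroot} gives $m_2-x=O(h\nu)$) and $g/h\to1$. Dividing \eqref{eq:ghat} by $2hm_2$ then gives $\hat g=0\iff rk=\beta\theta_2(k)$ in the limit. To show there is no root, I will compare the two sides. In continuous time $\theta_2\to \qt m_2/(\nu+\rho m_2)$, and $m_2$ grows like $\Theta c$-scale variance divided by $k$ while $\nu=hs_e\to0$. Hence $\theta_2\to \qt/\rho = (q+rk^2)/\rho$ whenever $\nu/m_2\to0$. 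Since $\beta\to1$, the condition becomes $rk\rho = q + rk^2$ up to vanishing terms, that is, $rk^2-\rho rk+q=0$. Its discriminant is $\rho^2r^2-4rq<0$ under the relevant regime ($\rho$ small). More robustly, I will note that $\beta\theta_2 \ge \qt m_2/(\nu+\rho m_2)$ and argue that $\beta\theta_2 > rk$ for all $k$ directly, which keeps $\hat g<0$.

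The main obstacle is this last step. A clean "no root for any $k>0$" claim needs the inequality $\qt m_2 > rk(\nu+\rho m_2)$, equivalently $q+rk^2-\rho rk > rk\nu/m_2$, uniformly in $k$. I will first try to use the fact that $\nu\to0$ as $h\to0$ with $s_e$ fixed, so the right side vanishes while the left side is bounded below by $q-\rho^2 r/4$. I expect that the statement implicitly needs $4q>\rho^2 r$, which holds at the default instance.
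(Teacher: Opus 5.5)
Your derivation of \eqref{eq:ghat} is correct and follows the paper's term-by-term computation. The one difference is the bootstrap term: you split $v_{n+1}=A_n+g\eta_n$ with $\eta_n$ independent of the pair $(v_n,A_n)$, where the paper invokes Isserlis. Your split is slightly more informative. It shows that \eqref{eq:ghat} needs only that independence and vanishing odd moments, so Gaussianity enters solely through $\theta_2$.

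Two small corrections to that part:
\begin{itemize}
\item Your earlier remark that $\bar z_n$ is independent of $\eta_n$ is false in the full model. The $-cv$ coupling lets the held control act on $z$ within the interval, and it is your $A_n/g$ split that actually makes the argument valid.
\item The hedge about zero-order-hold corrections to the running cost is unnecessary and not quite right. The paper's per-step cost is the sampled $h(qv_n^2+ru_n^2)$, and ``exactly at every $h$'' is relative to that convention. With the exact interval integral, the $q$-part does correlate with $v_n\eta_n$.
\end{itemize}

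For the continuum claim you are more careful than the paper, whose proof only asserts that $rk-\beta\theta_2$ is bounded away from zero ``by \eqref{eq:theta2}''. Your reading is the right one: with $s_e$ fixed, $\nu=hs_e\to0$ while $m_2$ stays bounded below by the disturbance-driven variance, which uses $\Theta c>0$. You can get $\theta_2\to\qt/\rho$ without the exact form of Lemma~\ref{lem:unitroot}. At fixed $k$ the increment $\Delta$ is $O(h)$ in $L^2$, so $m_2^2-x^2=O(h^2)$, while $(1-\beta)x^2=\rho h\,m_2^2(1+O(h))$. The limiting factor is then $-(rk^2-\rho rk+q)/\rho$. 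The hypothesis $\rho^2 r<4q$ you flag is genuinely needed: otherwise this quadratic has a positive root, so the statement as written for general parameters fails there.

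Drop your two fallback ideas, though.
\begin{itemize}
\item The ``more robust'' route that keeps $\nu>0$ cannot work, because the conclusion rests on $\nu/m_2\to0$. With $\nu$ held fixed the factor does change sign; for small $\nu$ it turns positive once $k\gtrsim 2\Theta c/\nu$. At $c=0$, where $m_2=\nu/(2k)$, it vanishes exactly at the discounted LQR gain $(\sqrt{\rho^2+4q/r}-\rho)/2$, as it should without aliasing.
\item Do not aim for a bound uniform in $k$. The claim is pointwise in $k$, and uniformity is impossible: at every sufficiently small $h>0$ the update has the root $\keq\approx\chi(\sigma)/h$, which escapes to infinity as $h\to0$.
\end{itemize}
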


Two consequences follow. First, the coupled critic-actor dynamics do have a fixed point at any positive sampling period. It exists only through time discreteness, and it is far from the optimum of the policy class: at
the default instance $\hat g$ vanishes at $\keq = 24.79$, against
$\kstar = 1.649$ (Figure~\ref{fig:mechanism}c). A critic given the full
state instead follows the true gradient $\dd J/\dd k$ and is stationary at
$\kstar$: the displacement is an effect of aliasing, not of discounting or
of the gradient estimator. Second, \eqref{eq:ghat} is the expectation of the \emph{actual} stochastic update, scale included, not merely proportional to it. With the critic held at its fixed point, the sample average of $\delta_n(-\eta_n v_n/s_e)$ over $1.5\times10^5$ steps agrees with $\hat g$ to within $3\%$ at $k \in \{1.65, 3, 6\}$, the residual being Monte Carlo error at that sample size (Appendix~\ref{app:experiments}).

\subsection{The stochastic iterates leave the optimum}

Proposition~\ref{prop:update} concerns the averaged dynamics. To confirm
that the stochastic algorithm follows them, we initialize online TD(0)
actor-critic \emph{at the class optimum} $\kstar$ and run six independent
seeds ($\alpha_{\mathrm c}=5\times10^{-3}$,
$\alpha_{\mathrm a}=10^{-4}$, critic warm-started at its fixed point). All
six leave the optimum monotonically, reaching $k = 4.82 \pm 0.03$ after
$1.2$M steps (Figure~\ref{fig:mechanism}d). The trajectory of the averaged dynamics, obtained by integrating \eqref{eq:ghat} at the same step size, reaches $6.02$. The iterates track its shape but lag it by roughly $20\%$,
because the online critic trails its instantaneous fixed point as $k$
moves. We return to this gap, the difference between the averaged dynamics and the stochastic iterates, in \S\ref{sec:limitations}. Nothing in this paper claims the iterates converge to $\keq$. The claims are that the expected update vanishes there and that the iterates leave $\kstar$ along its field.

\section{Fixed points of the expected update: a closed-form law}
\label{sec:law}                                             

We now know why the learner moves away from the optimum. The next question
is where it stops.

Proposition~\ref{prop:update} places the fixed point of the expected update
at gains that grow as the sampling period shrinks. The natural regime in
which to study it is therefore the joint limit $h \to 0$ with $\kappa = kh$
held fixed. This is high-gain feedback, with the gain measured in units of the sampling rate. In this limit the second moments obey
$m_2 = h^2 M(\kappa)\,(1+o(1))$ and $x = h^2 X(\kappa)\,(1+o(1))$ with
\begin{equation}\label{eq:MX}
  M(\kappa) = \frac{\Theta K(0)}{\kappa^{2}}
            + \frac{s_e}{\kappa(2-\kappa)},
  \qquad
  X(\kappa) = (1-\kappa)\,M(\kappa) + \frac{\Theta K(0)}{\kappa},
\end{equation}
and the fixed-point condition $\hat g = 0$ reduces to
$M^2 - X^2 = \kappa X M$ (Appendix~A). Everything about the environment now
enters through a single ratio,
\begin{equation}
  \sigma \;\coloneqq\; \frac{s_e}{\Theta K(0)} ,
\end{equation}
the exploration power measured against the zero-lag intensity of the memory
kernel.

\begin{theorem}[Closed form for the fixed point]\label{thm:chi}
For every $\sigma > 0$ the condition $M^2 - X^2 = \kappa X M$ has exactly one root $\chi \in (0,2)$. The root is given in closed form by the inverse of
\begin{equation}\label{eq:chiclosed}
  \sigma(\chi) \;=\; \frac{(2-\chi)\bigl(\sqrt{4+\chi^{2}}-\chi\bigr)}{2\chi},
  \qquad\text{equivalently}\qquad
  \sigma \chi^{2}\,(\sigma + 2 - \chi) \;=\; (2-\chi)^{2} .
\end{equation}
The root is simple: both factors of $\sigma(\chi)$ are positive and strictly
decreasing on $(0,2)$, so $\sigma(\cdot)$ is a strictly decreasing bijection
of $(0,2)$ onto $(0,\infty)$.
\end{theorem}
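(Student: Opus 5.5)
The plan is to remove the environment scale first, then treat the fixed-point condition as a quadratic in the ratio $t \coloneqq X/M$, and finally invert one linear relation for $\sigma$.

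\emph{Normalization.} Both $M$ and $X$ in \eqref{eq:MX} are linear in the pair $(\Theta K(0), s_e)$. The condition $M^2 - X^2 = \kappa X M$ is homogeneous of degree two in $(M,X)$. So I divide through by $\Theta K(0)$ and work with
\[
  M = \frac{1}{\kappa^2} + \frac{\sigma}{\kappa(2-\kappa)}, \qquad
  X = (1-\kappa)M + \frac{1}{\kappa},
\]
for $\kappa \in (0,2)$. On this interval $M>0$, so dividing the condition by $M^2$ is legitimate.

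\emph{Solving for the ratio.} With $t = X/M$ the condition becomes $1 - t^2 = \kappa t$. Write $s \coloneqq \sqrt{4+\kappa^2}$. The roots are
\[
  t_\pm = \frac{-\kappa \pm s}{2}.
\]
From the expression for $X$ we also have
\[
  t = 1 - \kappa + \frac{1}{\kappa M}, \qquad\text{so}\qquad \frac{1}{\kappa M} = t - 1 + \kappa .
\]
The left side is strictly positive. For the root $t_-$ the right side equals $(\kappa - s)/2 - 1$, which is negative. Hence $t_-$ is excluded, and every root in $(0,2)$ must satisfy
\[
  \frac{1}{\kappa M} = \frac{\kappa + s - 2}{2}.
\]

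\emph{Inverting for $\sigma$.} Since $\kappa M = 1/\kappa + \sigma/(2-\kappa)$, the last relation reads
\[
  \frac{\sigma}{2-\kappa} = \frac{2}{\kappa + s - 2} - \frac{1}{\kappa}.
\]
The key algebraic step is rationalizing $2/(\kappa+s-2)$ via
\[
  (\kappa + s - 2)(s - \kappa + 2) = s^2 - (\kappa-2)^2 = 4\kappa ,
\]
which gives $2/(\kappa+s-2) = (s-\kappa+2)/(2\kappa)$. Subtracting $1/\kappa$ leaves $(s-\kappa)/(2\kappa)$, so
\[
  \sigma = \frac{(2-\kappa)(s-\kappa)}{2\kappa},
\]
which is \eqref{eq:chiclosed}. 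Each step above is reversible. Thus, for $\chi \in (0,2)$, the condition holds exactly when $\sigma = \sigma(\chi)$.

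\emph{Polynomial form.} I would isolate $s = \chi + 2\sigma\chi/(2-\chi)$, whose right side is positive. Squaring then gives $4 + \chi^2 = \chi^2 + 4\sigma\chi^2/(2-\chi) + 4\sigma^2\chi^2/(2-\chi)^2$. Clearing denominators yields $(2-\chi)^2 = \sigma\chi^2(2 - \chi + \sigma)$. Positivity of the isolated right side means squaring introduces no spurious root on $(0,2)$.

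\emph{Monotonicity and bijectivity.} The factor $2-\chi$ is positive and strictly decreasing on $(0,2)$. The factor $(s-\chi)/\chi = 4/\bigl(\chi(s+\chi)\bigr)$ is positive, and it is strictly decreasing because $\chi(s+\chi)$ is increasing. A product of positive, strictly decreasing functions is strictly decreasing. At the endpoints, $\sigma \to \infty$ as $\chi \to 0^+$ and $\sigma \to 0$ as $\chi \to 2^-$. By continuity, $\sigma(\cdot)$ is a strictly decreasing bijection of $(0,2)$ onto $(0,\infty)$, so each $\sigma>0$ has exactly one root $\chi$.

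\emph{Simplicity.} The root is simple because the derivative $\sigma'(\chi)$ is strictly negative, which follows from the derivatives of the two positive factors.

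The main obstacle I anticipate is the root-selection step: ruling out $t_-$ and confirming that no root is lost or gained when squaring. The positivity of $1/(\kappa M)$ handles the first point, and the positivity of the isolated right-hand side handles the second.
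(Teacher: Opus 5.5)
Your proof is correct, but it reaches the closed form through a different unknown than the paper does. The paper expands $(M-X)(M+X)-\kappa XM$ into the quadratic $\kappa^3M^2-\kappa(2-\kappa)M-1=0$ and substitutes $M=A/(\kappa^2(2-\kappa))$ with $A=(2-\kappa)+\sigma\kappa$. It then solves $A^2-A(2-\kappa)^2-\kappa(2-\kappa)^2=0$ and selects the root by $A>0$. Because $A$ is affine in $\sigma$, the formula for $\sigma$ follows at once. You instead solve for the ratio $t=X/M$, whose quadratic $1-t^2=\kappa t$ contains no $\sigma$, and you discard $t_-$ by positivity of $1/(\kappa M)$. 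Your route exposes something the paper's hides. At the fixed point the normalized lag-one correlation takes the universal value $t_+=(\sqrt{4+\kappa^2}-\kappa)/2$, and the law reads simply $\sigma=(2-\chi)\,t_+(\chi)/\chi$. Your rationalization step could also be replaced by the identity $t_+(t_++\kappa)=1$. The paper's route, in turn, gives the polynomial form for free: expanding $A^2-A(2-\kappa)^2-\kappa(2-\kappa)^2$ yields exactly $\sigma\kappa^2(\sigma+2-\kappa)-(2-\kappa)^2$, so the ``equivalently'' clause holds identically with no squaring. Your squaring argument is nonetheless sound, since both sides of $s=\chi+2\sigma\chi/(2-\chi)$ are positive for $\sigma>0$. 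Your monotonicity argument is the paper's in different packaging. Your claim $\sigma'(\chi)<0$ is actually stronger than the strict monotonicity the paper invokes for simplicity. To close that step, note that $\partial_\sigma(X/M)=-1/\bigl(\kappa^2(2-\kappa)M^2\bigr)\neq0$. Differentiating $X/M=t_+(\kappa)$ along the root curve then gives $\partial_\kappa(X/M-t_+)=-\partial_\sigma(X/M)\,\sigma'(\chi)\neq0$. Since $M^2-X^2-\kappa XM=-M^2(t-t_+)(t-t_-)$ with $t_+\neq t_-$, this is exactly nondegeneracy of the root in $\kappa$.
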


The monotonicity argument is the entire uniqueness proof: for the second
factor, $\frac{\dd}{\dd\chi}\bigl(\sqrt{4+\chi^2}-\chi\bigr)
= \chi/\sqrt{4+\chi^2}-1 < 0$. The algebraic reduction from
$M^2-X^2=\kappa XM$ to \eqref{eq:chiclosed} is elementary and given in
Appendix~A; we verified the two expressions agree with a direct numerical
root-find to $2\times10^{-14}$ across six decades of $\sigma$.

Three consequences are immediate. First, the fixed-point gain is pinned to
the sampling rate, $\keq = \chi(\sigma)/h$: halving the sampling period
doubles the gain at which the expected update vanishes. Second, the limits
of \eqref{eq:chiclosed} are explicit,
\begin{equation}\label{eq:asymptotics}
  \chi = 2 - \bigl(2\sqrt{2}+2\bigr)\,\sigma + O(\sigma^{2})
  \quad (\sigma \to 0),
  \qquad
  \chi = \frac{2}{\sigma} + O(\sigma^{-2})
  \quad (\sigma \to \infty),
\end{equation}
so as exploration vanishes the fixed point approaches the sampled-control
stability boundary $\kappa = 2$, while for strong exploration
$\keq \approx 2\Theta K(0)/(s_e h)$: the fixed-point gain is inversely
proportional to the exploration power. The latter is a quantitative version
of the practitioner observation that larger exploration noise mitigates the
displacement. Third, the kernel enters \eqref{eq:chiclosed} only through
$K(0)$: two kernels of equal zero-lag intensity but different shape give the
same $\chi$, even though they give different optimal LQG and best-memoryless
costs. Figure~\ref{fig:laws}c shows twelve instances spanning $h$, $\gamma$,
exploration, kernel shape and one to three unobserved components falling
onto the curve \eqref{eq:chiclosed}; the curve is the closed form, not a
fit. Figure~\ref{fig:laws}a shows the resulting dissociation: as the
unobserved state is made faster, the policy-class gap stays nearly flat
while the learning gap grows. The two gaps are governed by different
properties of the same environment.

\begin{figure}[!tb]
  \centering
  \includegraphics[width=\textwidth]{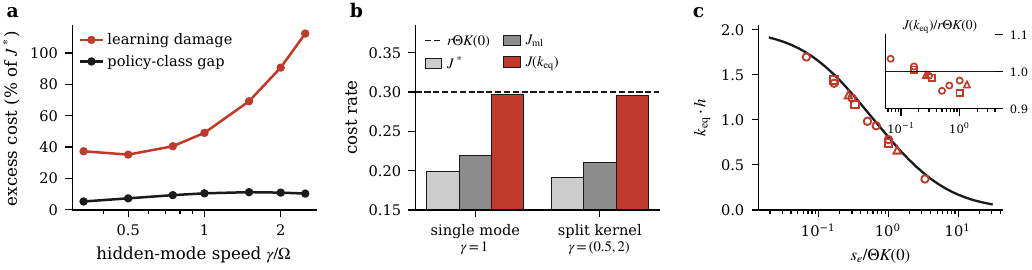}
  \caption{\textbf{The two gaps, and the law that governs the learning gap.}
  \textbf{(a)} Excess cost over $\Jopt$ as the unobserved state is made
  faster. The policy-class gap (black) is single-digit and nearly flat. The
  learning gap (red) is several times larger and \emph{grows} as memory
  becomes less important for the policy class.
  \textbf{(b)} Kernel intensity, not shape, sets the cost at the fixed point.
  Two kernels with the same $K(0)=0.3$ give different optimal and
  best-memoryless costs but the same cost at the fixed point, at the ceiling
  $r\Theta K(0)=0.300$ (dashed).
  \textbf{(c)} The fixed-point law in the joint limit. Markers: twelve
  instances spanning $h$, $\gamma$, exploration, kernel shape and component
  count (circles, squares, triangles for $N=1,2,3$), against the closed form
  \eqref{eq:chiclosed}, a curve with no fitted parameters. Inset: the same
  instances' cost against the ceiling.}
  \label{fig:laws}
\end{figure}

\paragraph{Rate of convergence.}
How fast is the limit attained? Consider the variant of \eqref{eq:sde} in which the feedback of $v$ on $z$ is removed, so that the unobserved state becomes an exogenous stationary disturbance and Lemma~\ref{lem:reduction} closes the loop exactly. For this variant the convergence is first order with an explicit coefficient.

\begin{theorem}[Finite sampling period]\label{thm:rate}
For the exogenous-disturbance variant, with the disturbance the stationary Gaussian process with autocovariance $\Theta K(\tau)$, $K(\tau)=\sum_i c_i e^{-\gamma_i\tau}$,
\begin{equation}
  \keq\, h \;=\; \chi(\sigma) \;+\; c_1 h \;+\; O(h^{2}),
\end{equation}
where $c_1$ depends on the kernel only through the weighted mean of the decay
rates, $\bar\gamma = \sum_i c_i\gamma_i / K(0)$, and does so affinely. All constants are explicit
(Appendix~C).
\end{theorem}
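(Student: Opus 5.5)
We plan to compute the two second moments exactly at finite $h$ for the exogenous variant, expand them to one order beyond the joint limit, and then perturb the root $\chi$ of Theorem~\ref{thm:chi} by the implicit function theorem. Removing the feedback of $v$ on $z$ makes $z$ a stationary Gaussian process with autocovariance $\Theta K(\tau)$, independent of the exploration noise. Lemma~\ref{lem:reduction} then closes the loop as an AR(1) recursion $v_{n+1}=a v_n + h(\bar z_n+\eta_n)$ with $a=1-\kappa$, driven by the interval-averaged disturbance. The interval averages $\bar z_n$ form a stationary sequence with explicit autocovariance
\[
  R_j \;=\; \E[\bar z_n \bar z_{n+j}] \;=\; \frac{\Theta}{h^2}\sum_i c_i\int_0^h\!\!\int_0^h e^{-\gamma_i|jh+t-s|}\,\dd t\,\dd s .
\]
For $j=0$ this equals $\Theta\sum_i c_i\,\frac{2}{(\gamma_i h)^2}\bigl(\gamma_i h-1+e^{-\gamma_i h}\bigr)$. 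For $j\ge1$ it equals $\Theta\sum_i c_i\,e^{-\gamma_i (j-1)h}\,\frac{(1-e^{-\gamma_i h})^2}{(\gamma_i h)^2}$.

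Next we sum the series $m_2=h^2\sum_{j,l}a^{j+l}R_{|j-l|}+h^2 s_e/(1-a^2)$. This is a geometric sum for each exponential component, so $m_2$ and $x=a m_2+h^2\E[v_n\bar z_n]+\dots$ come out as finite rational expressions in $a$ and $e^{-\gamma_i h}$. The sum over components is linear in $c_i$, so the moments are $K(0)$-weighted averages of single-exponential formulas. We then expand in $h$ at fixed $\kappa$:
\[
  m_2=h^2\bigl(M(\kappa)+h\,M_1(\kappa)+O(h^2)\bigr),\qquad
  x=h^2\bigl(X(\kappa)+h\,X_1(\kappa)+O(h^2)\bigr),
\]
with $M,X$ as in \eqref{eq:MX}. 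Each component enters the first correction only linearly in $\gamma_i$, through $e^{-\gamma_i h}=1-\gamma_i h+O(h^2)$, the $R_0$ factor $1-\gamma_i h/3$, and the factor $1-\gamma_i h$ in the geometric ratio $a e^{-\gamma_i h}$. Summing over components therefore gives $M_1,X_1$ affine in $\sum_i c_i\gamma_i=K(0)\bar\gamma$. After normalizing by $\Theta K(0)$, which is the quantity that made $\sigma$ the only parameter at leading order, $M_1/(\Theta K(0))$ and $X_1/(\Theta K(0))$ are affine in $\bar\gamma$ with coefficients depending only on $\kappa,\sigma$.

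Third, we expand the stationarity condition $\hat g=0$ of Proposition~\ref{prop:update}, $2hrk\,m_2=2\beta\theta_2 x g$. Into it we substitute $\theta_2$ from Proposition~\ref{prop:critic}, together with $\beta=1-\rho h+O(h^2)$ and $g=h+O(h^3)$. Writing $\qt=q+rk^2=r\kappa^2/h^2+q$, the condition becomes $F(\kappa,h)\coloneqq (M^2-X^2-\kappa XM)+h\,F_1(\kappa)+O(h^2)=0$. Here $F_1$ collects $M_1,X_1$, the $\rho$ term from $\beta$ in both $\beta x^2$ and the prefactor, and the $q$ correction. We will check that the $q$ correction is $O(h^2)$ relative to the leading term and so drops out. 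Since $\chi$ is a simple root (Theorem~\ref{thm:chi}: $\sigma(\cdot)$ is strictly monotone, so $\partial_\kappa F(\chi,0)\neq0$), the implicit function theorem gives $\keq h=\chi-h\,F_1(\chi)/\partial_\kappa F(\chi,0)+O(h^2)$. This identifies $c_1=-F_1(\chi)/\partial_\kappa F(\chi,0)$. $F_1$ is affine in $\bar\gamma$, and the denominator does not depend on the kernel beyond $\sigma$, so $c_1$ is affine in $\bar\gamma$.

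The main obstacle is the bookkeeping in the second step. The leading terms of $m_2$ scale like $\Theta K(0)/\kappa^2$ and come from near-cancellations ($1-a\,e^{-\gamma_i h}$ with $a=1-\kappa$). We must keep every $O(h)$ term consistently in $m_2$, in $x$, and in the mixed term $m_2^2-\beta x^2$, where the leading parts nearly cancel and amplify errors. We would first verify the expansion symbolically for $N=1$, then use linearity in $c_i$ to lift it to general $N$. We also need uniformity in $\kappa$ near $\chi\in(0,2)$ to justify the $O(h^2)$ remainder, which holds because everything is analytic in $(\kappa,h)$ away from $\kappa\in\{0,2\}$.
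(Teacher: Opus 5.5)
Your proposal is correct. It uses the same raw material as Appendix~C: the single-exponential lattice sums of Appendix~B, linearity in the kernel, and the simple root $\chi$. But it does something different with them.

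\textbf{The paper's route.} Appendix~C passes to an arbitrary completely monotone $C$ through its Bernstein measure $\mu$ and only \emph{bounds} the sampling error. Lemma~R gives $|\hat M-M|\le hLD_M(\kappa)$, where the truncation $\min(D\gamma h,B(\kappa))$ makes the bound integrable against any $\mu$ with $L=\int\gamma\,\mu(\dd\gamma)<\infty$. Stability of a simple root under a uniform $O(h)$ perturbation then controls $\keq h-\chi$ at order $h$, whence $|c_1|\le C(\sigma)(L/C(0)+\rho)$. The first-order coefficient itself, and its affine form in $\bar\gamma$, are supported by the table and a two-kernel fit rather than derived.

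\textbf{Your route.} You stay with finite sums, which is exactly the theorem's scope. Then $\Phi_h$ is real-analytic in $(\kappa,h)$ near $(\chi,0)$, and the implicit function theorem gives the expansion with a genuine $O(h^2)$ remainder and $c_1=-F_1(\chi)/\partial_\kappa F(\chi,0)$.

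Your route also explains the $\bar\gamma$ law instead of observing it. Linearity in $\mu$ alone only gives $c_1=\int f(\gamma)\,\mu(\dd\gamma)/C(0)$ for some $f$. What makes $f$ affine is that only the first $\gamma h$-derivative survives at order $h$.

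Carried out, your expansion gives
\[
M_1=-\frac{\Theta\sum_ic_i\gamma_i\,\bigl[\tfrac13+2(1-\kappa)/\kappa^2\bigr]}{\kappa(2-\kappa)},\qquad X_1=(1-\kappa)M_1-\frac{\Theta\sum_ic_i\gamma_i}{\kappa^2}.
\]
The $\rho$ part of $F_1$ reduces to $\rho M^2$ at the root. At $\sigma=0.3$, $\rho=0.05$ one obtains $c_1\approx-0.0274-0.8635\,\bar\gamma$, which matches the tabulated values and the paper's fitted line to the precision of that fit. So your route actually makes the constants explicit, as the theorem promises.

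\textbf{What the paper's framing buys.} It reaches beyond finite mixtures, for instance the power-law kernel of \S\ref{sec:scope}. There, however, an $O(h^2)$ remainder would need more than $L<\infty$, e.g.\ $\int\gamma^2\,\mu(\dd\gamma)<\infty$. So your analytic argument would not carry over verbatim, and neither does the paper's bound deliver it.

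\textbf{Two small repairs.} First, the lag moment is $x=a\,m_2+h\,\E[v_n\bar z_n]$ with $\E[v_n\bar z_n]=h\sum_{j\ge0}a^jR_{j+1}$, not $h^2\,\E[v_n\bar z_n]$. In this variant $g=h$ exactly.

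Second, strict monotonicity of $\sigma(\cdot)$ does not by itself give $\partial_\kappa F(\chi,0)\neq0$. The argument goes as follows.
\begin{itemize}
\item The derivative of $\sigma$ is strictly negative: $\dd\ln\sigma/\dd\chi=-1/(2-\chi)-1/\sqrt{4+\chi^2}-1/\chi<0$.
\item In units of $\Theta K(0)$ the limit is $F(\kappa,0)=\kappa M^2-(2-\kappa)M/\kappa-1/\kappa^2$. Its $M$-derivative at the root equals $\sqrt{4+\kappa^2}/\kappa>0$, and $\partial_\sigma M=1/(\kappa(2-\kappa))>0$, so $\partial_\sigma F>0$ there.
\item Differentiating $F(\chi,0;\sigma(\chi))\equiv0$ gives $\partial_\kappa F(\chi,0)=-\partial_\sigma F\,\sigma'(\chi)>0$.
\end{itemize}

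Finally, the implicit function theorem only identifies the root near $\chi$. Global uniqueness of the finite-$h$ root is not established by your argument, nor by the paper's.
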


Numerically, at $\sigma=0.3$, $\rho=0.05$: $c_1 = -0.891$, $-1.323$ and
$-1.539$ for $\gamma = 1$, $\gamma = 1.5$ and an equal mixture of
$\gamma \in \{0.5, 3\}$ respectively; the affine dependence predicts the
mixture coefficient from the two pure kernels to four digits. For the full
model \eqref{eq:sde} the feedback of $v$ on $z$ contributes a further
correction of the same order, quantified in Appendix~C. At the default instance it moves $\keq$ by $-0.4\%$, an order of magnitude less than the
kernel-averaging term. At $h = 0.05$ the exact fixed point of the full
model sits at $\keq = 24.79$ against the limit value
$\chi(0.3)/h = 25.80$.

\begin{corollary}[Multiple unobserved components]\label{cor:multimode}
For any finite kernel $K(\tau)=\sum_i c_i e^{-\gamma_i \tau}$ with Gaussian
stationary components, Theorems~\ref{thm:chi} and~\ref{thm:rate} hold with
$K(0)=\sum_i c_i$; no other property of the kernel enters the limit.
\end{corollary}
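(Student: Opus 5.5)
The plan is to show that, for a finite sum of Gaussian stationary components, every ingredient of the joint limit depends on the kernel only through $K(0)$. I will then re-run the arguments behind Theorems~\ref{thm:chi} and~\ref{thm:rate} with this single substitution. By Lemma~\ref{lem:reduction}, $v_{n+1}=(1-\kappa)v_n+h\bar z_n+h\eta_n$ holds for any realization of the unobserved state. Each component contributes additively to $\bar z_n$, so the whole dependence on the kernel passes through the autocovariance sequence $C_j\coloneqq\E[\bar z_n\bar z_{n+j}]$. For a sum of independent Ornstein--Uhlenbeck components with autocovariance $\Theta\sum_i c_i e^{-\gamma_i|\tau|}$, this is
\[
C_j=\frac{\Theta}{h^2}\sum_i c_i\int_0^h\!\!\int_{jh}^{jh+h} e^{-\gamma_i|t-s|}\,\dd t\,\dd s
=\Theta\sum_i c_i\,e^{-\gamma_i jh}\bigl(1+O(\gamma_i h)\bigr).
\]
I will solve the linear recursion for $m_2$ and $x$ as geometric sums against $(1-\kappa)^{j}$. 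In the joint limit $h\to0$ with $\kappa$ fixed, the sum is dominated by lags $j=O(1)$, where $e^{-\gamma_i jh}\to1$. Hence $\sum_j(1-\kappa)^{j}C_j\to\Theta K(0)/\kappa$, and this reproduces \eqref{eq:MX} with $K(0)=\sum_i c_i$.

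Next I will check that the critic and actor formulas are unchanged. Proposition~\ref{prop:critic} and Proposition~\ref{prop:update} reduce to polynomials in $m_2$ and $x$ through Isserlis factorization, which needs only joint Gaussianity of $(v,v',\eta,\bar z)$. That is exactly where the Gaussian-components hypothesis enters, since a sum of independent Gaussian components is Gaussian. The condition $\hat g=0$ then becomes $M^2-X^2=\kappa XM$ with the same $M,X$. It therefore depends on the kernel only through $\sigma=s_e/(\Theta K(0))$, and Theorem~\ref{thm:chi} applies verbatim, closed form and uniqueness included.

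For Theorem~\ref{thm:rate} the statement itself already covers a general finite kernel, with $c_1$ depending on it only through $\bar\gamma$. I will therefore only need to confirm that the $O(h)$ expansion of $C_j$ is linear in the weights $c_i$. That is clear because $C_j$ is a sum over components, and it is what yields the affine dependence on $\bar\gamma=\sum_ic_i\gamma_i/K(0)$.

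The main obstacle is the endogenous full model \eqref{eq:sde}, where each component is driven by $v$. There the components are no longer exogenous, and the limit must be justified with feedback present. I would first show that the feedback term $-c_i v$ is $O(h)$ relative to the noise in the joint limit, since $v=O(h)$ when $\kappa$ is fixed. This makes it a correction to $c_1$ only and leaves the leading order, and hence $\chi$, untouched. Second, the interchange of the limit with the infinite lag sum requires uniform geometric decay. This holds because $|1-\kappa|<1$ on $(0,2)$, and it gives dominated convergence.
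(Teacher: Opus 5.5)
Your proposal is correct, but it reaches the limit \eqref{eq:MX} by a different decomposition than the paper.

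The paper works directly in the coupled model \eqref{eq:sde}. It reads the $(v,z)$ and $(v,v)$ entries off the stationary Lyapunov equation $\Sigma=F_k\Sigma F_k^{\top}+s_eGG^{\top}+\Sigma_d$ and repeats the $(v,z_i)$ balance component by component. It then notes that the cross-covariances $\Sigma_{z_iz_j}$ drop out in the limit, because the components interact only through $v=O(h)$. Summing gives $\Theta K(0)$ in place of $\Theta c$. The back-coupling needs no separate treatment there, since the entries $-c_ih$ already sit in $F_k$.

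You instead treat the exogenous variant first. You use the moving-average representation of $v$ and lattice sums of interval-averaged autocovariances, which is essentially Steps~1--3 of the proof of Theorem~\ref{thm:cost} with the exploration term added. You then add the feedback as an $O(h)$ perturbation.

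Your route buys generality and transparency. The second-moment limit holds for any autocovariance continuous at the origin, not only for finite exponential sums. Together with Gaussianity, it therefore also covers exogenous kernels such as the power law of \S\ref{sec:scope}. It also makes immediate the linearity in the $c_i$ that underlies the affine $\bar\gamma$ dependence in Theorem~\ref{thm:rate}.

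The price is the extra perturbation step for the endogenous model, which you only sketch. It closes cleanly as follows. Lemma~\ref{lem:reduction} still writes $v$ as the same geometric filter of the full $\bar z$. Write $z_i=z_i^{(0)}+\delta z_i$ with $\delta z_i(t)=-c_i\int_{-\infty}^t e^{-\gamma_i(t-s)}v(s)\,\dd s$. Then $\|\delta z_i\|_{L^2}\le (c_i/\gamma_i)\sup_s\|v(s)\|_{L^2}$. A bootstrap through the filter gives $\|v\|_{L^2}=O(h)$ and $\|v-v^{(0)}\|_{L^2}=O(h^2)$. By Cauchy--Schwarz, $m_2$ and $x$ then differ from their exogenous values by relative $O(h)$.

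That $O(h)$ is also the true order of the cross-covariances. The stationary $(z_i,z_j)$ balance gives $\Sigma_{z_iz_j}\approx-(c_i\Sigma_{vz_j}+c_j\Sigma_{vz_i})/(\gamma_i+\gamma_j)$ with $\Sigma_{vz_i}=O(h)$. The paper's stated $O(h^2)$ is therefore an overstatement, though a harmless one for the limit.
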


The Gaussian qualifier in Corollary~\ref{cor:multimode} is not a
technicality. Section~\ref{sec:scope} shows that the fixed-point location is
\emph{not} invariant to the disturbance's distribution at fixed autocovariance; only the cost of \S\ref{sec:cost} is. The qualifier marks a real boundary of the law, not a limitation of the proof technique.

\section{The deployed cost under high-gain feedback}\label{sec:cost} 

Knowing where the learner stops is only half of what we want. The other half
is what stopping there costs.

This section prices the fixed point. The result is the most general in the
paper. It requires neither Gaussianity nor any parametric structure of the
unobserved dynamics, and it says what a controller at high gain is actually
doing with its control effort.

Consider deployment ($s_e=0$) of a static output-feedback policy at gain
$k=\kappa/h$ on the observed coordinate
\begin{equation}\label{eq:general-drive}
  \frac{\dd v}{\dd t} \;=\; \zeta_t + u ,
\end{equation}
where the disturbance $\zeta$ is \emph{any} stationary, ergodic,
mean-square-continuous process with autocovariance
$C(\tau)=\E[\zeta_0\zeta_\tau]$ and $C(0)<\infty$. The only discrete-time element is the zero-order hold on $u$. The model of
\S\ref{sec:model} corresponds to the exogenous-disturbance variant with
$C(\tau)=\Theta K(\tau)$; here nothing else about $\zeta$ is assumed.

\begin{theorem}[Deployed cost]\label{thm:cost}
For every fixed $\kappa\in(0,2)$, in the stationary regime of
\eqref{eq:general-drive},
\begin{equation}
  J(\kappa/h;\,h)\;\xrightarrow[\;h\to0\;]{}\; r\,C(0),
  \qquad
  q\,\E[v^2]\to 0,
  \qquad
  \E\bigl[(u_n+\bar\zeta_n)^2\bigr]\to 0 ,
\end{equation}
where $\bar\zeta_n$ is the interval average of the disturbance.
\end{theorem}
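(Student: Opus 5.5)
The plan is to work entirely at the level of the sampled recursion of Lemma~\ref{lem:reduction}, which for \eqref{eq:general-drive} with $s_e=0$ reads $v_{n+1}=(1-\kappa)v_n+h\bar\zeta_n$. This identity is exact and uses nothing about $\zeta$ beyond integrability. Since $|1-\kappa|<1$, the stationary solution is the causal linear filter
\[
v_n=h\sum_{j\ge0}(1-\kappa)^j\bar\zeta_{n-1-j}.
\]
All second moments of $v$ are therefore quadratic forms in the autocovariance of the sequence $\bar\zeta_n$. No distributional assumption enters: stationarity and $C(0)<\infty$ suffice to make the series converge in $L^2$, and ergodicity only identifies the time-average cost with these ensemble moments.

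The first step bounds $\E[v_n^2]$. By Cauchy--Schwarz on the interval average, $\E[\bar\zeta_n^2]\le C(0)$. Minkowski's inequality in $L^2$ then gives
\[
\|v_n\|_2\le h\sqrt{C(0)}\sum_j|1-\kappa|^j=\frac{h\sqrt{C(0)}}{1-|1-\kappa|}.
\]
Hence $\E[v^2]=O(h^2)$ uniformly in the distribution. This already yields $q\E[v_n^2]\to0$. The continuous-time state cost also vanishes: within an interval, $v_t=v_n+\int_{nh}^t(\zeta+u_n)\,\dd s$, and that correction is $O(h)$ in $L^2$, using the step-two bound on $u_n$.

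The second step controls the control error. Write $u_n+\bar\zeta_n=-kv_n+\bar\zeta_n$ and rearrange the recursion as
\[
\bar\zeta_n-kv_n=\frac{v_{n+1}-v_n}{h}.
\]
So it suffices to show $\E[(v_{n+1}-v_n)^2]=o(h^2)$. Stationarity gives $\E[(v_{n+1}-v_n)^2]=2(m_2-x)$, exactly the quantity in Lemma~\ref{lem:unitroot}: it equals $-2\E[v\Delta]$. Now expand $\Delta_n=-\kappa v_n+h\bar\zeta_n$ using the filter representation. This gives
\[
\Delta_n/h=\bar\zeta_n-\kappa\sum_j(1-\kappa)^j\bar\zeta_{n-1-j}.
\]
The weights $\kappa(1-\kappa)^j$ sum to $1$, so $\Delta_n/h$ is $\bar\zeta_n$ minus a convex-type average of past values $\bar\zeta_{n-1-j}$. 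Here mean-square continuity does the work. Each lag $j$ corresponds to a time lag of order $jh$, and $\E[(\bar\zeta_n-\bar\zeta_{n-1-j})^2]=2(\bar C(0)-\bar C(j+1))\to0$ as $h\to0$ for each fixed $j$, where $\bar C$ is the autocovariance of the averages. The geometric weights, uniformly summable for fixed $\kappa$ (including $\kappa\in(1,2)$, where the weights alternate but $\sum|\kappa(1-\kappa)^j|<\infty$), let dominated convergence give $\E[(\Delta_n/h)^2]\to0$. This is the $L^2$ version of $\E[(u_n+\bar\zeta_n)^2]\to0$. I expect this step to be the main obstacle. When $\kappa>1$ the alternating weights do not sum to $1$ in absolute value, so I will instead write the difference as $\sum_j w_j(\bar\zeta_n-\bar\zeta_{n-1-j})$ with $\sum_j w_j=1$ and $\sum_j|w_j|<\infty$, and apply Minkowski termwise.

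Finally, the cost follows. The control term is $r\E[u_n^2]=r\E[(u_n+\bar\zeta_n-\bar\zeta_n)^2]$, and by the triangle inequality in $L^2$ it differs from $r\E[\bar\zeta_n^2]$ by $o(1)$. Continuity of $C$ at $0$ gives $\E[\bar\zeta_n^2]=h^{-2}\iint C(s-t)\,\dd s\,\dd t\to C(0)$. Discounting contributes only a normalization factor, assuming $J$ is normalized as an average rate, since the stationary moments are time-invariant. Combining the three limits gives $J(\kappa/h;h)\to rC(0)$.
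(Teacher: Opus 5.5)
Your proof is correct. It uses the same ingredients as the paper's Appendix~\ref{app:cost}: the exact zero-order-hold recursion, the causal filter $v_n=h\sum_j a^j\bar\zeta_{n-1-j}$, the Cauchy--Schwarz bound by $C(0)$, fixed-lag convergence from mean-square continuity, and domination by geometric weights. You apply them in the opposite order, and in norm rather than in covariance.

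The paper first computes $m_2$ sharply. It writes $m_2=h^2\sum_{j,l}a^{j+l}\bar R(j-l)$, collects terms by diagonal with weights $a^{|m|}/(1-a^2)$, and applies dominated convergence to obtain $m_2=h^2C(0)\kappa^{-2}(1+o(1))$. From this it gets $\E[u^2]=(\kappa/h)^2m_2\to C(0)$. The rejection limit $\E[(u_n+\bar\zeta_n)^2]\to0$ is only sketched, by expanding the square and treating the cross term ``by the same lattice-sum argument.''

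You prove the rejection limit first. You write $u_n+\bar\zeta_n=\Delta_n/h=\sum_j\kappa a^j(\bar\zeta_n-\bar\zeta_{n-1-j})$ and apply Minkowski termwise, which covers $\kappa\in(1,2)$ at no extra cost. The control cost then follows from the triangle inequality together with $\bar R(0)\to C(0)$. The state cost needs only your crude bound $\|v_n\|_2\le h\sqrt{C(0)}/(1-|1-\kappa|)$.

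Your route is more elementary: a single series of norms replaces a double covariance sum. It also turns the disturbance-rejection statement, which the paper calls the interpretive content, into a complete argument rather than a sketch. It recovers the paper's asymptotic for $m_2$ as a by-product, via $m_2=(h/\kappa)^2\E[u_n^2]$.

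The paper's lattice sum is an exact identity for $m_2$ at every $h$. That is what feeds the closed-form finite-$h$ cost \eqref{eq:m2closed} and the remainder analysis of Appendix~\ref{app:remainders}. Your inequalities only bound the error terms, so they do not yield those corrections without further work.
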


The third limit is the interpretive content. At high gain the sampled controller performs disturbance rejection: $u$ converges in $L^2$ to the negative of the disturbance. The deployed cost is exactly the power of the signal being cancelled, at price $r$. The state is pinned
essentially for free; the entire cost is the cancelling control effort.

The proof (Appendix~B) is short enough to sketch. At deployment
Lemma~\ref{lem:reduction} gives the exact recursion
$v_{n+1}=(1-\kappa)v_n + h\bar\zeta_n$, whose stationary solution is a
linear filter of $\{\bar\zeta_n\}$ with geometrically decaying weights.
Its second moment is a lattice sum of interval-average autocovariances $\bar C(m)$. Cauchy-Schwarz gives $|\bar C(m)|\le C(0)$, and mean-square continuity gives $\bar C(m)\to C(0)$ for each fixed lag as $h\to0$. Dominated convergence then yields $m_2 = h^2C(0)/\kappa^2\,(1+o(1))$. Then
$\E[u^2]=(\kappa/h)^2m_2\to C(0)$ while $q\,\E[v^2]=O(h^2)$. No
distributional property of $\zeta$ beyond its autocovariance ever enters:
the second-moment structure of a linear recursion is blind to everything
else.

\begin{corollary}[Cost at the learning fixed point]\label{cor:cost-at-fp}
Wherever the learning dynamics place their fixed point at a gain of order $1/h$, that is $\keq h \to \chi \in (0,2)$ as established in \S\ref{sec:law}, the deployed cost at the fixed point satisfies
$J(\keq) = r\,C(0)\,(1+O(h))$, \emph{regardless of the distribution of the
disturbance and regardless of where in $(0,2)$ the value $\chi$ falls}.
\end{corollary}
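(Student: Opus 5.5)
The plan is to upgrade Theorem~\ref{thm:cost} from a pointwise-in-$\kappa$ limit to a statement that is locally uniform in $\kappa$ and carries an explicit $O(h)$ error. Then I will evaluate it along the sequence $\kappa_h \coloneqq \keq h$, which by Theorems~\ref{thm:chi} and~\ref{thm:rate} satisfies $\kappa_h = \chi + O(h)$ with $\chi\in(0,2)$. For $h$ small, $\kappa_h$ therefore lies in a fixed compact interval $[a,b]\subset(0,2)$.

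The first step is to write the deployed cost exactly. Using Lemma~\ref{lem:reduction} with $s_e=0$, the stationary solution is $v_n = h\sum_{j\ge0}(1-\kappa)^j\bar\zeta_{n-1-j}$. This gives
\[
m_2 = h^2\sum_{j,l\ge0}(1-\kappa)^{j+l}\,\bar C(j-l),
\]
and $J = (q+rk^2)\,m_2\,(1+O(h))$, where the $O(h)$ accounts for the within-interval drift of $v$ under the hold and for the discounting normalization. Substituting $k=\kappa/h$ makes the dominant term
\[
r\kappa^2\sum_{j,l}(1-\kappa)^{j+l}\bar C(j-l).
\]
The next step is to compare this with $r\kappa^2 C(0)\sum_{j,l}(1-\kappa)^{j+l} = rC(0)\,\kappa^2/\kappa^2 = rC(0)$. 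The difference is
\[
r\kappa^2\sum_{j,l}(1-\kappa)^{j+l}\bigl(\bar C(j-l)-C(0)\bigr).
\]
Since $|1-\kappa|\le\max(1-a,\,b-1)<1$ on $[a,b]$, the geometric weights are summable uniformly in $\kappa$ over that interval.

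The main obstacle is getting an $O(h)$ rate from this difference rather than just $o(1)$. Dominated convergence, as used in Theorem~\ref{thm:cost}, gives only $o(1)$, and that is uniform on $[a,b]$ because the dominating series is uniform there. An $O(h)$ bound needs $|\bar C(m)-C(0)| \le L\,(|m|+1)\,h$. This holds whenever $C$ is Lipschitz at the origin, in the sense $C(0)-C(\tau)\le L|\tau|$. The exponential kernels $\Theta K(\tau)$ satisfy this, and so does any kernel with a finite one-sided derivative at $0$. I would state this as the regularity under which the $O(h)$ claim holds; for a merely mean-square-continuous $\zeta$ one only gets $1+o(1)$. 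Under it, the error is bounded by
\[
r\kappa^2 L h\sum_{j,l}|1-\kappa|^{j+l}(|j-l|+1),
\]
which is $O(h)$ uniformly on $[a,b]$. The $q\,m_2=O(h^2)$ term and the hold corrections are also $O(h)$.

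Finally, I apply this uniform estimate at $\kappa=\kappa_h$ to obtain $J(\keq)=rC(0)(1+O(h))$. Because the expansion was uniform over every compact subinterval of $(0,2)$, neither the actual value of $\chi$ nor the $O(h)$ offset $c_1h$ from Theorem~\ref{thm:rate} enters the leading term. Only the autocovariance was used, so no distributional assumption enters either. The exact location $\keq$ may depend on the disturbance's distribution, as \S\ref{sec:scope} shows, but the argument only needs $\keq h$ to stay bounded away from $0$ and $2$.
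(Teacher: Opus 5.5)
Your argument is correct. Its skeleton is the paper's own: the paper obtains this corollary simply by evaluating Theorem~\ref{thm:cost} at $\kappa=\keq h$, and that theorem's proof is the same lattice-sum comparison of $m_2$ with $h^2C(0)/\kappa^2$ that you use.

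What you add supplies two things the paper leaves implicit.

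\textbf{The rate.} Theorem~\ref{thm:cost} is a limit at fixed $\kappa$, and its dominated-convergence proof yields only $o(1)$. The paper's $O(h)$ is actually justified only for exponential autocovariance, through the closed form \eqref{eq:m2closed} and its expansion \eqref{eq:Jcorrection}. Your bound $|\bar C(m)-C(0)|\le L(|m|+1)h$ gives the rate for any kernel with $L=|C'(0^+)|<\infty$. This is exactly the quantity the paper assumes in Appendix~\ref{app:remainders}.

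Your caveat that mean-square continuity alone gives only $1+o(1)$ is also right. Take the admissible kernel $C(\tau)=C(0)e^{-|\tau|^{\alpha}}$ with $\alpha<1$, at $\kappa=1$. Then $J=r\bar C(0)+O(h^2)$, with
\[
1-\bar C(0)/C(0)\sim 2h^{\alpha}/\bigl((\alpha+1)(\alpha+2)\bigr),
\]
which is not $O(h)$. So the corollary's ``regardless of the distribution'' with an $O(h)$ remainder needs precisely the regularity you state.

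\textbf{Uniformity.} Making the estimate uniform on compact subsets of $(0,2)$ is what legitimately lets you evaluate at the moving argument $\kappa_h=\keq h$ rather than at a fixed $\kappa$.

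\textbf{A small point.} You do not need Theorems~\ref{thm:chi} and~\ref{thm:rate} to get $\kappa_h=\chi+O(h)$. Those are Gaussian results, so they are unavailable for the non-Gaussian disturbances this corollary is meant to cover. The hypothesis $\keq h\to\chi\in(0,2)$ already places $\kappa_h$ in a fixed compact interval for small $h$. As you note at the end, that is all your argument uses.
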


The corollary decouples two questions that the next section shows are
genuinely distinct: \emph{where} the fixed point sits (which does depend on
the distribution) and \emph{what it costs} (which does not).

For exponentially correlated disturbances, $C(\tau)=C(0)e^{-\gamma\tau}$,
the finite-$h$ cost has a closed form (Appendix~B), with leading correction
\begin{equation}\label{eq:Jcorrection}
  \frac{J}{r\,C(0)} \;=\; 1 \;-\;
  \frac{2\gamma h\,(1-\kappa)}{\kappa(2-\kappa)} \;-\; \frac{\gamma h}{3}
  \;+\; O\!\bigl((\gamma h)^2\bigr),
\end{equation}
large and negative at small $\kappa$, changing sign near $\kappa=1$. Table~\ref{tab:cost}
tests the closed form against simulations of five disturbance processes
sharing the autocovariance $C(\tau)=0.3\,e^{-\tau}$ but differing in
distribution (constructed in \S\ref{sec:scope}), each evaluated at its own
learning fixed point. The formula has no adjustable parameters, and agreement is $0.5$ to $2.5\%$. The largest deviation from the limiting value $rC(0)=0.300$ is the two-state Markov (telegraph) disturbance, measured $6.7\%$ below it. The closed form at its $\kappa=0.438$ predicts $7.2\%$ below, so the measurement agrees with the theory to $0.5\%$. The truncation \eqref{eq:Jcorrection} is not accurate at this $\kappa$, where $\gamma h/\kappa = 0.11$ is not small; it gives $9.9\%$. Finite-$h$ corrections account for the deviation, but the closed form is needed, not its first-order term. The $L^2$ disturbance-rejection statement is also directly measurable. The residual $\E[(u+\bar\zeta)^2]$ equals $\E[\Delta^2]/h^2$ by Lemma~\ref{lem:reduction}. It comes out at $6$ to $11\%$ of $C(0)$ at $h=0.05$ across the same five processes, an $O(\gamma h/\kappa)$ quantity as the proof predicts.

\begin{table}[!tb]
  \centering\small
  \caption{Deployed cost at the learning fixed point for five disturbance
  processes with identical autocovariance $C(\tau)=0.3e^{-\tau}$
  (\S\ref{sec:scope}), against the closed form of Appendix~B evaluated at
  each process's own $\kappa=\keq h$. No fitted parameters. The residual for
  the two heavy-tailed processes is consistent with slower Monte Carlo
  averaging at equal sample size (Appendix~E).}
  \label{tab:cost}
  \begin{tabular}{lccccc}
    \toprule
    disturbance & $\kappa=\keq h$ & $J$ (closed form) & $J$ (simulated) & error \\
    \midrule
    two-state Markov         & 0.438 & 0.2785 & 0.2798 & $-0.5\%$ \\
    Ornstein-Uhlenbeck      & 1.231 & 0.2997 & 0.2985 & $+0.4\%$ \\
    difference of squares    & 1.456 & 0.3041 & 0.3015 & $+0.9\%$ \\
    squared OU (centred)     & 1.494 & 0.3051 & 0.2975 & $+2.5\%$ \\
    squared OU, symmetrised  & 1.494 & 0.3051 & 0.2981 & $+2.3\%$ \\
    \bottomrule
  \end{tabular}
\end{table}

\section{What the kernel controls and what the distribution controls}
\label{sec:scope}                                           

We now have two laws, one for where the learner stops and one for what that
costs. They look equally general. They are not, and the difference matters.

Theorem~\ref{thm:chi} says the fixed point depends on the environment only
through $K(0)$. That statement was proved for Gaussian unobserved dynamics,
and any test run inside the model stays within that family. There is a
structural reason why more evidence of the same kind cannot dismiss the
qualifier.

The critic's normal equations over $(1,v,v^2)$ involve fourth moments of the
stationary loop, $\mathrm{Var}(v^2)$ and $\mathrm{Cov}(v^2,v'^2)$. The closed
form of Proposition~\ref{prop:critic} rests on their Isserlis reduction to
$m_2$ and $x$. This is the only place Gaussianity enters the analysis. At
high gain the loop keeps $v$ close to a filtered copy of the disturbance,
so what feeds the curvature coefficient is the second-order structure of
the \emph{squared} disturbance $\zeta^2$. For any Gaussian process,
however, $\mathrm{Cov}(\zeta_0^2,\zeta_\tau^2) = 2C(\tau)^2$: the
statistics of $\zeta^2$ are completely determined by the autocovariance.
Within a Gaussian family, $C(0)$ and $\mathrm{Var}(\zeta^2)$ cannot be varied independently. No amount of in-family evidence, including the twelve instances of Figure~\ref{fig:laws}c, could therefore distinguish ``the fixed
point depends on $C(0)$'' from ``the fixed point depends on
$\mathrm{Var}(\zeta^2)$''. Separating the two requires leaving the Gaussian
family while holding the autocovariance fixed.

\subsection{Five disturbance processes, one autocovariance}

We construct five stationary processes, all with
$C(\tau)=0.3\,e^{-\tau}$ exactly, spanning $\mathrm{Var}(\zeta^2)$ from
zero to $1.26$ and including a matched pair that isolates skewness:

\begin{enumerate}\itemsep2pt
  \item \emph{two-state Markov (telegraph):} $\zeta \in \{\pm\sqrt{0.3}\}$
    with switching rate $1/2$. Here $\zeta^2$ is constant:
    $\mathrm{Var}(\zeta^2)=0$.
  \item \emph{Ornstein-Uhlenbeck:} the Gaussian reference;
    $\mathrm{Var}(\zeta^2)=2C(0)^2=0.18$.
  \item \emph{difference of squares:}
    $\zeta=\sqrt{0.075}\,(w_1^2-w_2^2)$ with $w_1,w_2$ independent
    unit-variance OU processes of decay rate $1/2$. Symmetric, excess
    kurtosis $6$, $\mathrm{Var}(\zeta^2)=0.72$.
  \item \emph{squared OU (centred):} $\zeta=\sqrt{0.15}\,(w^2-1)$, $w$ a
    unit-variance OU process of decay rate $1/2$. Skewness $\sqrt8$,
    $\mathrm{Var}(\zeta^2)=1.26$.
  \item \emph{squared OU, symmetrised:} process 4 multiplied by an
    independent two-state sign process of switching rate $0.01$, which
    removes the skew while changing $C(\tau)$ by at most $2\%$ over lags up
    to the disturbance correlation time. Same
    $\mathrm{Var}(\zeta^2)=1.26$ as process 4.
\end{enumerate}

Processes 4 and 5 are the matched pair: identical autocovariance and
identical $\mathrm{Var}(\zeta^2)$, differing only in odd moments. The
qualitative outcomes of runs 3 and 5 were recorded in advance of running
them (Appendix~E). For each process we locate the learning fixed point
empirically, as the zero of the sample expected update with the critic fit
by least squares on the simulated trajectory. The pipeline reproduces the exact Gaussian value to $0.7\%$ (Appendix~E).

\subsection{The fixed point moves; the cost does not}

\begin{figure}[!tb]
  \centering
  \includegraphics[width=0.72\textwidth]{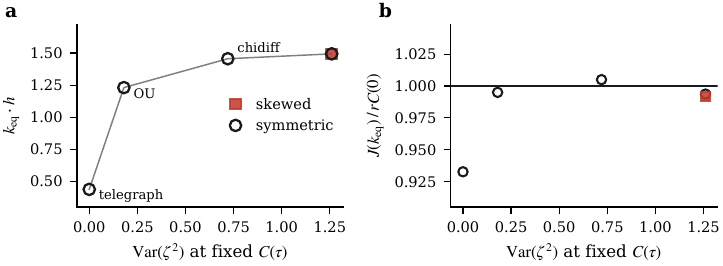}
  \caption{\textbf{The cost is distribution-free; the location is not.}
  Five disturbance processes with identical autocovariance
  $C(\tau)=0.3\,e^{-\tau}$ (\S\ref{sec:scope}).
  \textbf{(a)} The fixed-point gain moves by a factor of $3.4$, ordered by
  $\mathrm{Var}(\zeta^2)$ and saturating above the Gaussian value. Odd
  moments are irrelevant: the skewed process (red square) and its
  symmetrised counterpart (open circle) coincide. Grey line: guide to the
  eye.
  \textbf{(b)} Cost at each process's own fixed point against the limit
  $rC(0)$ (line). Four sit within $0.5\%$. The telegraph point sits $6.7\%$
  below, against $7.2\%$ predicted by the finite-$h$ closed form of
  Appendix~B at its $\kappa=0.438$.}
  \label{fig:baths}
\end{figure}

Figure~\ref{fig:baths} shows the result. The fixed-point gain spans $\keq = 8.75$ (telegraph) to $29.88$ (both squared-OU processes), a factor of $3.4$ at \emph{identical} autocovariance. It is ordered by $\mathrm{Var}(\zeta^2)$, steeply on the sub-Gaussian side and saturating above the Gaussian value. The matched pair settles the role of odd
moments: processes 4 and 5 give the same $\keq$ to within the bisection
resolution of $\pm0.4$. The mechanism reads directly off the telegraph case. With $\zeta^2$ constant, the fluctuations of $v^2$ carry far less of the disturbance's persistence, and the fitted curvature coefficient $\theta_2$ at fixed $k$ is roughly one third of its Gaussian value. What the critic misattributes is not the persistence of the observation alone but the persistence of its \emph{square}. Meanwhile the deployed cost at each
process's own fixed point stays within $0.5$ to $6.7\%$ of $rC(0)$
(Table~\ref{tab:cost}), with every deviation accounted for by
\eqref{eq:Jcorrection}: exactly the split that
Corollary~\ref{cor:cost-at-fp} predicts.

On the Gaussian side, the law extends beyond the finite-component kernels of Corollary~\ref{cor:multimode}. A Gaussian disturbance with the power-law autocovariance $C(\tau)=0.3\,(1+\tau)^{-3/2}$, which lies outside every finite-component family, gives $\keq=24.4$ at $h=0.05$. This value is inside the finite-$h$ band that Theorem~\ref{thm:rate} predicts around the limit value (Appendix~C).

\subsection{Scope of the laws}

The results of \S\ref{sec:law} and \S\ref{sec:cost} therefore split cleanly by
what each quantity depends on:

\begin{itemize}\itemsep2pt
  \item \emph{Deployed cost at the fixed point:} $r\,C(0)$, for any
    stationary ergodic mean-square-continuous disturbance
    (Theorem~\ref{thm:cost}). Proven, distribution-free.
  \item \emph{Fixed-point location:} $\keq h = \chi(s_e/C(0))$ for
    Gaussian disturbances (Theorem~\ref{thm:chi},
    Corollary~\ref{cor:multimode}); \emph{not} distribution-invariant in general. The five-process experiment is a counterexample, not an outlier.
\end{itemize}

\begin{conjecture}\label{conj:nongaussian}
At fixed autocovariance, the fixed-point gain is determined jointly by
$C(0)$ and the second-order structure of the squared disturbance
$\zeta^2$, monotonically and with saturation in
$\mathrm{Var}(\zeta^2)$, and independently of odd moments.
\end{conjecture}

The five processes above are consistent with
Conjecture~\ref{conj:nongaussian} but do not pin it down: across them the
correlation time of $\zeta^2$ varies together with
$\mathrm{Var}(\zeta^2)$, and a design that separates the two is left open.
The practical reading of this section is the paper's central asymmetry in
one sentence: to predict \emph{what partial observability will cost} a
bootstrapped learner, the disturbance's zero-lag power suffices; to
predict \emph{where the learner's update field will vanish}, its
distribution matters.

\section{The bootstrap horizon}\label{sec:lambda}           

So far the picture is entirely negative. This section gives the one design
choice that undoes it.

Everything so far bootstraps over a single step: the critic of
Proposition~\ref{prop:critic} is the TD(0) fixed point, and the update
\eqref{eq:ghat} feeds back its one-step temporal-difference error. Practical
implementations interpolate between this extreme and Monte Carlo evaluation
using $\lambda$-returns \citep{sutton1988td}, in policy-gradient form
generalized advantage estimation \citep{schulman2016gae}. The parameter
$\lambda$ sets the \emph{bootstrap horizon}: the $\lambda$-return integrates
observed costs over $h/(1-\beta\lambda)$ of physical time, in expectation,
before substituting the critic's estimate for the remainder. This section computes the fixed points of the resulting expected update exactly. The horizon turns out to be the variable that removes the pathology, and the removal is not gradual: it happens across a narrow interval of $\lambda$.

The critic is now the LSTD($\lambda$) fixed point over the same features
$(1,v,v^2)$ \citep{bradtke1996lstd}, $A(\lambda)\,\theta = b(\lambda)$ with
\begin{equation}\label{eq:lstdlam}
  A(\lambda) = \sum_{j\ge 0} (\beta\lambda)^j\,
     \E\bigl[\phi_{t-j}\,(\phi_t - \beta\phi_{t+1})^{\!\top}\bigr],
  \qquad
  b(\lambda) = \sum_{j\ge 0} (\beta\lambda)^j\, \E[\phi_{t-j}\, c_t],
\end{equation}
and the actor's expected update is
$\hat g_\lambda(k) = \E\bigl[(-\eta_t v_t/s_e)\sum_{j\ge 0}
(\beta\lambda)^j \delta_{t+j}\bigr]$.

\begin{proposition}[Expected update at general $\lambda$]\label{prop:lambda}
In the stationary closed loop, every lag moment in \eqref{eq:lstdlam} and in
$\hat g_\lambda$ is a Gaussian moment, and the geometric sums evaluate to
resolvents in the closed-loop transition matrix $F_k$. As a result, $\hat g_\lambda(k)$ is an explicit algebraic function of $(I-\beta\lambda F_k)^{-1}$ and $(I-\beta\lambda\, F_k\!\otimes\!F_k)^{-1}$, exact at every $h$ and every $\lambda\in[0,1]$. At $\lambda=0$ it reduces to Proposition~\ref{prop:update}. The expressions are collected in Appendix~\ref{app:lambda}.
\end{proposition}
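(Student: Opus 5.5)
We would prove Proposition~\ref{prop:lambda} by augmenting the sampled closed loop into a finite-dimensional linear Gauss--Markov chain and reading every required lag moment off its stationary covariance. Concretely, we would take the exact discretization of \eqref{eq:sde} under zero-order hold: the pair $(v,z)$ at sampling instants together with the quantities needed to express the interval average $\bar z_n$ of Lemma~\ref{lem:reduction}. This gives a state $s_n$ with $s_{n+1} = F_k s_n + G\eta_n + w_n$, where $F_k$ is the closed-loop transition matrix, stable for gains in the stability region, and $(\eta_n,w_n)$ are i.i.d.\ Gaussian, with $w_n$ the integrated Wiener increment. The observation $v_n = e^\top s_n$ is linear in $s_n$, and $\eta_n$ is independent of $s_n$. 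Hence all variables at all lags are jointly Gaussian with stationary covariance $\Sigma$, obtained from the discrete Lyapunov equation $\Sigma = F_k\Sigma F_k^\top + Q$. The lagged covariances are $\E[s_{t+j}s_t^\top] = F_k^j\Sigma$ for $j\ge0$.

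Next we would expand each term. The features $\phi=(1,v,v^2)$, the cost $c_t = q v_t^2 + r u_t^2$, and the TD error $\delta_t = c_t + \beta\theta^\top\phi_{t+1} - \theta^\top\phi_t$ are polynomials of degree at most two in Gaussian variables. So each entry of $\E[\phi_{t-j}(\phi_t-\beta\phi_{t+1})^\top]$ and of $\E[\phi_{t-j}c_t]$ is a moment of degree at most four. By Isserlis, each such moment is a sum of products of pairwise covariances. The degree-two parts are linear in $e^\top F_k^j\Sigma e$ and its one-step shifts. The degree-four parts contain squares of those quantities, which can be written as $(e\otimes e)^\top (F_k\otimes F_k)^j(\Sigma e\otimes\Sigma e)$, plus products of two degree-two moments. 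Those products contain no $j$ in the constant-by-constant terms, and we would handle them via the geometric sum $\sum_j(\beta\lambda)^j = (1-\beta\lambda)^{-1}$, the scalar resolvent. Summing over $j$ termwise turns $\sum_j(\beta\lambda)^jF_k^j$ into $(I-\beta\lambda F_k)^{-1}$ and $\sum_j(\beta\lambda)^j(F_k\otimes F_k)^j$ into $(I-\beta\lambda F_k\otimes F_k)^{-1}$. Both converge for $\lambda\in[0,1]$ because the spectral radius of $\beta\lambda F_k$ is below one. This gives $A(\lambda)$ and $b(\lambda)$ explicitly, and $\theta$ follows by solving the $3\times3$ system. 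Then $\hat g_\lambda$ is treated the same way. The factor $\eta_t v_t$ multiplies $\delta_{t+j}$ for $j\ge0$. Since $\eta_t$ enters $s_{t+1}$ through $G$, the future moments are Gaussian moments involving $F_k^{j}G$ and $F_k^{j-1}G$. We would separate the $j=0$ term, where $\eta_t$ appears in $u_t^2$, from the $j\ge1$ terms, which sum to the same two resolvents.

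The main obstacle is the bookkeeping of the odd/even structure and the $j=0$ boundary terms. We would need to check that odd moments vanish, which is what gives first-order coefficient zero by symmetry. We would also need to check that the constant feature's cross terms with $v^2$ are correctly centred, and that the non-stationary-looking $j=0$ pieces involving $\eta_t$ are isolated exactly. Invertibility of $A(\lambda)$ is a second concern. We would argue it from the standard positive-definiteness of the LSTD($\lambda$) matrix for linearly independent features under a stable chain, which holds whenever $\beta\lambda<1$ or the discount is strict ($\rho>0$).

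Finally, we would set $\lambda=0$. Only the $j=0$ terms survive, so $A(0)=\E[\phi_t(\phi_t-\beta\phi_{t+1})^\top]$ and $\hat g_0 = \E[(-\eta_tv_t/s_e)\delta_t]$, which are exactly the objects of Propositions~\ref{prop:critic} and~\ref{prop:update}. This gives the reduction to \eqref{eq:ghat}.
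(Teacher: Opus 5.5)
Your plan follows the paper's argument in Appendix~\ref{app:lambda}: joint Gaussianity of the sampled linear loop, Isserlis reduction of every lag moment to $x_j=e_0^{\top}F_k^{\,j}\Sigma e_0$ and $y_j=s_e\,e_0^{\top}F_k^{\,j-1}G$, Kronecker resolvents for the squared and product terms $x_j^2$ and $x_jy_j$, and separation of the $j=0$ term of $\hat g_\lambda$, which at $\lambda=0$ recovers Propositions~\ref{prop:critic} and~\ref{prop:update}. The one difference is that the paper centres the $v^2$ row against the constant row, so the $(1-\beta\lambda)^{-1}$ terms cancel and $\theta_2(\lambda)$ comes out directly as a ratio of Kronecker-resolvent sums, without the full $3\times3$ solve; also, nonsingularity of $A(\lambda)$ needs $\beta<1$ itself, not merely $\beta\lambda<1$ (at $\beta=1$ the constant-feature row vanishes for every $\lambda$), and $\beta<1$ holds here because $\rho>0$.
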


At the default instance the root structure is as follows (dense scans in
$k$ at each $\lambda$; all statements verified numerically). The update has
a \emph{single} root at every $\lambda$: there is no fold, and no second
fixed point appears or disappears. Instead the one equilibrium migrates continuously to the policy optimum as $\lambda \to 1$, starting from the aliased location of \S\ref{sec:law} (the $\lambda=0$ slice of this section, $\keq h = \chi + O(h)$). The migration is concentrated in a narrow interval:

\begin{table}[!tb]
  \centering\small
  \begin{tabular}{lcccccccc}
    \toprule
    $\lambda$ & $0$ & $0.9$ & $0.95$ & $0.98$ & $0.984$ & $0.986$ & $0.99$ & $0.999$\\
    \midrule
    $\keq$ & $24.79$ & $20.55$ & $17.67$ & $11.40$ & $8.61$ & $3.43$ & $2.20$ & $1.55$\\
    $J(\keq)$ & $0.2964$ & $0.2921$ & $0.2888$ & $0.2781$ & $0.2699$ & $0.2365$ & $0.2229$ & $0.2198$\\
    excess over $\Jml$ & $35.0\%$ & $33.0\%$ & $31.5\%$ & $26.6\%$ & $22.9\%$ & $7.7\%$ & $1.5\%$ & $0.1\%$\\
    \bottomrule
  \end{tabular}
  \caption{The equilibrium of the expected GAE($\lambda$) update at the
  default instance. The excess cost drops from a third to one percent
  inside $\lambda \in [0.98, 0.99]$; the excess halves at
  $\lambda_{1/2} = 0.9854$.}
  \label{tab:lambda}
\end{table}

Three features of Table~\ref{tab:lambda} matter later. First, the excess is essentially flat up to $\lambda = 0.95$, a value widely considered conservative in practice, and disappears within the next four hundredths. Second, the $\lambda \to 1$ endpoint is $1.55$, near but not at
$\kstar = 1.649$: the residual is discounting bias, the mismatch between
the discounted learning objective and the average deployed cost. It scales
linearly with $\rho$ (reducing $\rho$ five-fold, from $0.05$ to $0.01$,
shrinks the offset from $6.1\%$ to $1.2\%$ of $\kstar$). Third, the location of the transition is set, to first approximation, by the horizon matching the kernel. Across $\gamma \in \{0.5, 1, 2\}$ and $h \in \{0.025, 0.05, 0.1\}$, a sixteen-fold variation in $\gamma h$, the half-excess point $\lambda_{1/2}$ sits where the horizon $h/(1-\beta\lambda)$ equals $1.9$ to $3.4$ correlation times $1/\gamma$ of the unobserved state. The matching rule is a heuristic, not an invariant;
the exact statement is the root of Proposition~\ref{prop:lambda}.

The heuristic still has predictive force, because it moves the transition
with $\gamma$: at $\lambda = 0.97$ the model puts the equilibrium excess at
$29.6\%$ for $\gamma=1$ but $3.9\%$ for $\gamma=2$. Those are opposite verdicts at the same $\lambda$. No single value of $\lambda$ is safe or unsafe by
itself; safety is a property of the horizon relative to the environment's
memory. Figure~\ref{fig:deeprl}(a) plots $J(\keq(\lambda))$ as a curve and
carries this section's law into deep reinforcement learning, where
\S\ref{sec:memory} tests it.

\begin{figure}[!tb]
  \centering
  \includegraphics[width=\textwidth]{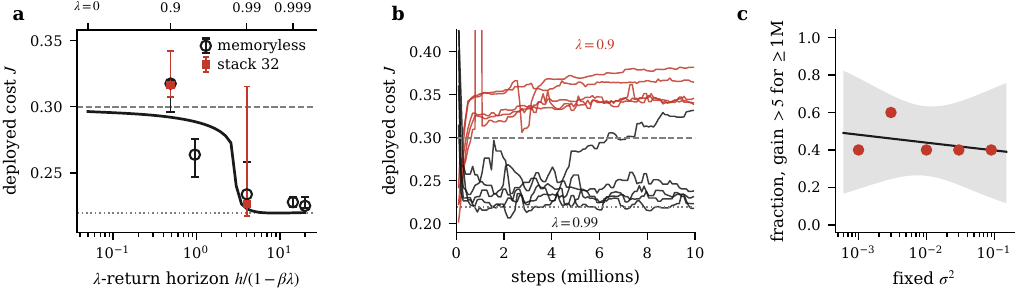}
  \caption{\textbf{The bootstrap horizon sets the outcome, and the memoryless closed
  form predicts it.}
  \textbf{(a)} Cost at the equilibrium of the exact memoryless expected
  update (curve, \S\ref{sec:lambda}) against the $\lambda$-return horizon
  (top axis: $\lambda$). Points: PPO at $10^7$ steps, median of five seeds
  (whiskers: range), memoryless (open circles) and 32-frame stack (red
  squares). The two architectures coincide, and no run improves on
  $\Jml$ (dotted). Dashed: the ceiling $r\Theta K(0)$. Deviations are
  finite-time effects (\S\ref{sec:memory}).
  \textbf{(b)} The controlled contrast at identical fixed exploration
  variance $\sigma^2=0.09$, differing only in $\lambda$ (five seeds each,
  3-sample running median). Every $\lambda=0.9$ seed (red) leaves the
  low-cost solution; four of five $\lambda=0.99$ seeds (black) hold near
  $\Jml$.
  \textbf{(c)} Fraction of seeds sustaining implied gain $>5$ for
  $\geq 10^6$ steps, across fixed $\sigma^2$ spanning two decades. Logistic
  fit with 95\% band: slope $-0.17\pm 0.58$. No threshold exists.}
  \label{fig:deeprl}
\end{figure}

\section{Does this happen in deep reinforcement learning?}
\label{sec:memory}                                          

The closed-form results concern a linear policy, a quadratic critic, and
expected updates. This section asks whether they predict the behaviour of a
standard deep implementation. We run proximal policy optimization
\citep{schulman2017ppo} on the default instance, using a public reference
implementation \citep{huang2022cleanrl} at a fixed version, with
multilayer-perceptron actor and
critic. Section~\ref{sec:experiments} describes the harness, the
pre-registration protocol, and the frozen decision rules; here we report
the outcomes.

Two architectures are compared. Memoryless agents observe $v_n$ alone.
Frame-stack agents observe the last 32 samples of $v$, a window spanning
$1.6$ correlation times of the unobserved state, so the input contains the
information needed to estimate $z$. Every run trains for $10^7$ steps, at
five seeds per configuration. We measure two quantities at deployment,
with exploration noise removed. The first is the deployed cost $J$,
estimated on one long evaluation episode. The second is the implied gain,
defined as the regression coefficient of $-u$ on $v$ under the deployed
policy.

Figure~\ref{fig:deeprl}(a) overlays the runs on the law of
\S\ref{sec:lambda}. Three facts stand out. First, the runs follow the
memoryless equilibrium curve. At $\lambda=0.99$ the memoryless median is
$J=0.2338$, against an equilibrium value of $0.2229$. At $\lambda=0.9$ the
memoryless median is $0.3175$, on the aliased side and far above $\Jml$;
the seed trajectories overshoot the equilibrium value of $0.2921$, a
finite-time effect we return to in \S\ref{sec:limitations}. The implied
gains agree as well. The model puts the $\lambda=0.9$ equilibrium at
$\keq=20.55$. The seeds that sustain the high-gain regime for millions of
steps end at implied gains with median $20.4$. Second, architecture
does not enter. The frame-stack medians are $0.3165$ at $\lambda=0.9$ and
$0.2265$ at $\lambda=0.99$, indistinguishable from the memoryless agents
at the same $\lambda$. Third, no run of either architecture at any
$\lambda$ improves on $\Jml = 0.2196$. The best median in the entire
record is $0.2251$, at memoryless $\lambda=1$. The two gaps therefore
separate empirically. The learning gap opens and closes with the bootstrap
horizon, following the closed form. The policy-class gap stays open in every
run.

The natural reading of frame stacking is that memory should remove the
aliasing and with it the pathology. The record says otherwise. Stacked
agents leave the optimum at $\lambda=0.9$ exactly as memoryless agents do.
They reach the class optimum at $\lambda=0.99$ exactly as memoryless
agents do. Input memory is not used memory: the stacked critic behaves
like the memoryless critic of the exact model at every $\lambda$ tested.
Memory in the architecture is therefore neither necessary for reaching the
class optimum nor sufficient to avoid the aliased fixed point. We note for
completeness that an earlier in-house implementation did show stacked
agents reaching the class optimum at $\lambda=0.9$ over a shorter training
horizon (Appendix~E). That effect did not survive the reference
implementation at $10^7$ steps. The benefit of frame stacking is
implementation-dependent in our record; the $\lambda$ dependence is not.

Two alternative explanations are excluded by construction. The first
attributes the failure at $\lambda=0.9$ to the collapse of the learned
exploration noise late in training. Panel (b) of
Figure~\ref{fig:deeprl} shows the controlled contrast: exploration
variance held fixed at $\sigma^2=0.09$ in both arms, with only $\lambda$
differing. All five $\lambda=0.9$ seeds leave the low-cost solution; four
of five $\lambda=0.99$ seeds hold near $\Jml$. Holding the entropy up does
not protect the solution. The second explanation posits a threshold in the
exploration magnitude itself. Panel (c) shows the fraction of seeds that
sustain the high-gain regime across fixed $\sigma^2$ spanning two decades.
The fraction is flat. A logistic fit of the seed outcomes on
$\log_{10}\sigma^2$ gives slope $-0.17 \pm 0.58$: no threshold exists.
Both arms of this test were pre-registered with frozen decision rules
before the runs were launched (\S\ref{sec:experiments}).

\FloatBarrier
\section{Pre-registered experiments}\label{sec:experiments} 

Numerical agreement in a model this small is easy to obtain after the
fact. We therefore adopted a pre-registration protocol for the deep
experiments. Predictions were written down and committed to
version control before each batch was submitted to the cluster. Decision
rules were frozen the same way: the analysis constants, the verdict each
outcome produces, and the validity controls that must pass before any
verdict is read. The harness is a single-file implementation at a fixed
version. Each configuration is one text line, and each batch is an array
over configurations and seeds. Appendix~\ref{app:experiments} lists every
frozen constant, and states what the published record does and does not
establish about the ordering.

The protocol did its job on the experiment that mattered. An earlier
draft of this work conjectured, in writing, that the failure of
frame-stack agents at $\lambda=0.9$ was caused by the collapse of the
learned exploration noise late in training. The committed prediction was
that agents with $\sigma^2$ held fixed at $0.09$ would keep the low-cost
solution.

The decision rules were frozen before launch as follows. A seed counts as
holding the solution if its final deployed cost lies in $[0.20, 0.27]$
with implied gain at most $5$. It counts as sustaining the high-gain
regime if its implied gain exceeds $5$ for at least $10^6$ consecutive
steps. It counts as diverged if its final cost exceeds $0.45$. A row's
verdict requires agreement of three of five seeds. Divergence routes to a
diagnostic arm at one quarter of the learning rate. If that arm holds the
solution, the conjecture survives with a step-size caveat; if not, the
conjecture is falsified. Two validity controls gate everything. The
learned-$\sigma$ arm must reproduce the failure, with final cost at least
$0.29$ and gain at least $7$. The $\lambda=0.99$ arm must hold the
solution. If either control fails, the batch is invalid and no verdict is
read.

Both controls passed. The learned-$\sigma$ arm failed as required, with
median cost $0.3165$ and median gain $13.9$. The $\lambda=0.99$ arm held
in four of five seeds. The registered prediction then failed completely.
Zero of five fixed-$\sigma$ seeds at $\lambda=0.9$ held the solution;
final costs lie between $0.344$ and $0.360$, above the ceiling
$r\Theta K(0)=0.300$. The quarter-learning-rate arm also held in zero of
five seeds. Both branches of the frozen rule therefore return the same
verdict. The conjecture is falsified. Holding the exploration variance up
does not protect the solution, as \S\ref{sec:memory} reports.

A second batch swept the fixed variance across two decades,
$\sigma^2 \in \{0.001, 0.003, 0.01, 0.03\}$, again with predictions
committed before launch. Under the threshold hypothesis, the number of
seeds sustaining the high-gain regime should decrease in $\sigma^2$. The
committed brackets were at least four of five at $0.001$, at least three
at $0.003$, and at most one at $0.03$. The observed counts are $2$, $3$, $2$, and $2$, with $2$ of
five at $\sigma^2=0.09$. The pattern is flat and fails the committed
brackets in both directions. The logistic fit of
Figure~\ref{fig:deeprl}(c) makes the conclusion quantitative. No
threshold in $\sigma^2$ exists.

One gap in the frozen rules should be recorded. Three of the five
fixed-$\sigma$ seeds at $\lambda=0.9$ ended in none of the three
registered categories. Their gain rose and then fell toward zero. Their
cost climbed above the ceiling but stayed below the divergence
threshold. The registered prediction fails regardless, because it
required the seeds to hold the solution and none did. But the three-way
rule was not exhaustive, and a future registration should name this
fourth outcome in advance. We return to this behaviour in
\S\ref{sec:limitations}.

\section{Limitations}\label{sec:limitations}                

Every closed-form statement in this paper concerns the expected update,
not the stochastic iterates. The drift experiment of \S\ref{sec:mechanism}
shows the iterates leave the optimum along the expected field, lagging its
integrated trajectory by roughly $20\%$. We do not prove that the iterates
converge to $\keq$, and at $\lambda=0.9$ the deep runs settle above the
equilibrium cost rather than at it. A two-timescale analysis of the
coupled critic and actor iterates would close this gap. We have not done
it.

One reproducible behaviour is unexplained. It appears in three of five fixed-$\sigma$ seeds at $\lambda=0.9$ and in
four of five seeds of the diagnostic arm. The implied gain rises toward
the equilibrium, destabilizes before arriving, and falls to roughly zero.
The deployed cost meanwhile rises above the ceiling. The deployed policies in this state are poorly summarized by a
linear gain. The averaged theory says nothing about this route, and the
frozen decision rules of \S\ref{sec:experiments} did not anticipate it.

The laws give the locations of equilibria, not the time to reach them.
The $\lambda=0.95$ runs illustrate the difference: the equilibrium sits
at $k=17.7$, but the drift toward it is slow near $\kstar$ and $10^7$
steps do not resolve it. Any use of Table~\ref{tab:lambda} for finite
training budgets needs this caveat.

The model is deliberately minimal. The observed coordinate is scalar, the
costs are quadratic, the control enters through a zero-order hold, and
the policy class is linear. Within the model, the location law is proved
for Gaussian disturbances only, and \S\ref{sec:scope} shows that the
qualifier is real. Conjecture~\ref{conj:nongaussian} records what we
believe happens beyond it, and the correlation-time structure of
$\zeta^2$ remains confounded with $\mathrm{Var}(\zeta^2)$ in our process
family.

The deep evidence is one environment family, one algorithm, and one
reference implementation. The absence of a frame-stacking benefit is a
statement about this record, not a theorem: an earlier in-house
implementation behaved differently at a shorter horizon (Appendix~E).
Recurrent architectures are untested. The $\gamma$ prediction of
\S\ref{sec:lambda}, opposite verdicts at $\lambda=0.97$ for $\gamma=1$
against $\gamma=2$, is stated for future registration rather than claimed.

\section{Related work}\label{sec:related}                   

State aliasing under function approximation is as old as the field's
study of partial observability \citep{singh1994learning}.
\citet{perkins2002existence} proved that fixed points of approximate
policy iteration exist under continuous exploration, and gave an
$\varepsilon$-greedy counterexample. Notably, their non-convergent agent
out-performed their convergent one. That was an early sign that
convergence and solution quality separate under aliasing. This paper is in that
tradition, with the existence question replaced by exact locations.

Modern finite-time analyses bound the effect this paper computes.
\citet{cayci2024finite} give finite-time guarantees for actor-critic
under partial observability in which an aliasing error enters as an upper
bound, and \citet{lambrechts2025asymmetric} bound the aliasing term for
asymmetric actor-critic methods. Bounds of this kind certify that damage
is limited; they do not locate the fixed points or price them. The
closed forms here do both, for one solvable instance.

The full-state critic used as a control in \S\ref{sec:mechanism} is the
idealized asymmetric critic: a critic trained on privileged state while
the actor acts on observations \citep{baisero2022unbiased,
cai2024privileged}. In the model, that critic removes the displacement
entirely. Recurrent critics that learn state from history are analysed
by \citet{cayci2024recurrent}, and convergence of learning with agent
state by \citet{kara2023convergence}. Our frame-stack results suggest
that having history and using it are different things, which sharpens
the case for such architectures rather than replacing them.

On the control side, the policy class $u=-kv$ is static output feedback,
whose optimization landscape under \emph{true} gradients is studied by
\citet{fatkhullin2021optimizing}. The present paper adds the learned
critic and shows where its bias moves the stationary points. The optimal
LQG benchmark is classical \citep{kalman1960new, astrom1965optimal}, and
the reduction of an unobserved coordinate to a memory kernel is the
Mori-Zwanzig projection \citep{zwanzig1961memory, mori1965transport}.

Finally, the convergence theory of TD with linear features
\citep{tsitsiklis1997analysis, bradtke1996lstd, boyan2002lstd} assumes
the feature process is driven by a Markov state. Under aliasing that premise fails in a specific, computable way.
Lemma~\ref{lem:unitroot} pins the observed statistics that the critic
must fit, and this is what makes the bias exactly calculable rather than
merely bounded.

\section{Conclusion}\label{sec:conclusion}                  

Partial observability harms a bootstrapped actor-critic learner through
two separate routes. The first is the policy-class gap, which in this model
is modest and nearly flat. The second is the learning gap, the displacement
of the expected update's equilibrium away from the class optimum, and it is
several times larger. The paper gives the learning gap three closed-form
laws. The location law: $\keq h = \chi(\sigma)$,
with $\chi$ an elementary function of the exploration-to-disturbance
ratio. The cost law: deployed cost $r\,C(0)$ at high gain, distribution-free.
The horizon law: the equilibrium migrates to the class optimum as the
$\lambda$-return horizon passes the memory of the environment, with the
transition concentrated in a narrow interval of $\lambda$.

The mechanism is short to state. Stationarity forces the persistence of
the unobserved state into the observed statistics that a memoryless
critic must fit. A quadratic feature class can put that persistence only
into value curvature. The actor then follows an update field whose zero
sits at high gain. Everything else in the paper is this sentence made
exact.

The deep experiments follow the memoryless closed form, architecture
does not enter, and no run closes the policy-class gap. A pre-registered
test falsified the natural alternative explanation: holding exploration
noise up does not protect the solution, and no threshold in exploration
magnitude exists. The practical reading is direct. Choose the bootstrap
horizon against the memory of the environment, not by convention; do not
expect entropy bonuses or input memory alone to prevent the displacement.
Open problems remain on both sides: a two-timescale analysis of the
stochastic iterates, the non-Gaussian location law of
Conjecture~\ref{conj:nongaussian}, recurrent critics, and the registered
$\gamma$ prediction of \S\ref{sec:lambda}.

\section*{Acknowledgements}                                 
The author acknowledges the use of the UCL Myriad High Performance Computing
Facility (Myriad@UCL), and associated support services, in the completion of
this work.

\bibliography{refs}

\appendix
\section{Proofs for Sections \ref{sec:mechanism} and \ref{sec:law}}
\label{app:proofs}                                          

Throughout this appendix all processes are stationary, mean zero and, where
stated, jointly Gaussian. We use the Isserlis identities $\E[abc]=0$ and
$\E[abcd]=\E[ab]\E[cd]+\E[ac]\E[bd]+\E[ad]\E[bc]$ without further comment.
The stationary covariance $\Sigma$ of the closed loop solves
$\Sigma = F_k \Sigma F_k^{\!\top} + s_e\,GG^{\!\top} + \Sigma_d$, where $F_k$
is the closed-loop transition matrix, $G$ the control input vector and
$\Sigma_d$ the sampled process noise. Recall $m_2 = \Sigma_{vv}$,
$x = \E[v v']$, $\qt = q + rk^2$ and $g = G_1$. Every closed form below was
checked against the Isserlis moment machinery of \texttt{theorem2\_learning.py}.
Propositions~\ref{prop:critic} and~\ref{prop:update} agree to $10^{-13}$ or
better, and the closed form \eqref{eq:chiclosed} agrees with a direct
numerical root of $M^2-X^2=\kappa XM$ to $2\times10^{-16}$ over six decades
of $\sigma$.

\subsection{Proof of Lemma~\ref{lem:unitroot}}

The first statement uses stationarity only. Write $\Delta = v'-v$. Then
$\E[v'^2] = \E[v^2] + 2\E[v\Delta] + \E[\Delta^2]$, and stationarity gives
$\E[v'^2]=\E[v^2]$, so $\E[v\Delta] = -\tfrac12\E[\Delta^2]$. No
distributional assumption enters.

For the evaluation, take the $(1,1)$ entry of the stationary covariance
equation. The exploration noise enters $v'$ through $g\eta$ with
$g^2 s_e = h^2 s_e + O(h^3)$, and the sampled process noise contributes
$(\Sigma_d)_{11}=O(h^3)$, because the observed coordinate carries no direct
diffusion. The balance is $2(m_2-x) = g^2 s_e + (\Sigma_d)_{11}$, which gives
\eqref{eq:unitroot} after substituting $\nu = h s_e$ and collecting the
$\kappa$ dependence of the sampled recursion. In continuous time the same
identity reads $2(\Sigma_{vz}-k\Sigma_{vv}) + s_e = 0$ per unit time. The
unobserved state supplies, on average, exactly the motion the control
removes, whatever the values of $k$, $\gamma$ and $c$. \qed

\subsection{Proof of Proposition~\ref{prop:critic}}

That $\theta_1=0$ follows by parity. Every entry of the $v$ row of the LSTD
system is a third Gaussian moment or $\E[v\,c]$. The only odd term in the
cost is $-2rkh\,v\eta$, and $\E[v\cdot v\eta]=m_2\E[\eta]=0$ by independence
of $\eta$. All such entries vanish.

For $\theta_2$, centre the $\phi = v^2$ equation by subtracting $m_2$ times
the $\phi=1$ equation:
\[
  \E[(v^2-m_2)c] + \beta\theta_2\bigl(\E[v^2v'^2]-m_2^2\bigr)
  - \theta_2\bigl(\E[v^4]-m_2^2\bigr) = 0 .
\]
Isserlis gives $\E[v^2v'^2] = m_2^2 + 2x^2$ and $\E[v^4]=3m_2^2$. In the cost
term write $c = h(qv^2 + rk^2v^2 - 2rk\,v\eta + r\eta^2)$. The $v\eta$ term
contributes $\E[(v^2-m_2)v\eta] = \E[v^3]\E[\eta] = 0$. The $\eta^2$ term
contributes $\E[v^2-m_2]\,\E[r\eta^2]h = 0$. The $v^2$ terms give
$h\qt(\E[v^4]-m_2^2) = 2h\qt m_2^2$. Collecting,
$2h\qt m_2^2 + 2\beta\theta_2 x^2 - 2\theta_2 m_2^2 = 0$, which is
\eqref{eq:theta2}. The denominator is positive: $x < m_2$ strictly by
Cauchy-Schwarz, since equality would require $v'\equiv v$, impossible for
$s_e>0$. \qed

\subsection{Proof of Proposition~\ref{prop:update}}

With $\partial_k\log\pi_k(u\mid v) = -v\eta/s_e$ we have
$\hat g = -s_e^{-1}\E[v\eta\,\delta]$. Take the three contributions in turn.
The cost term keeps only its odd part,
$\E[v\eta\cdot(-2rkh\,v\eta)] = -2rkh\,m_2 s_e$, contributing $+2hrk\,m_2$.
The bootstrap term gives
$\E[v\eta\cdot\beta\theta_2 v'^2] = 2\beta\theta_2\,\E[vv']\,\E[\eta v']$ by
Isserlis, since the pairing $\E[v\eta]\E[v'^2]$ vanishes; here
$\E[\eta v'] = g\,s_e$, because $\eta$ enters $v'$ only through $G\eta$ and is
independent of the state and of the process noise. This contributes
$-2\beta\theta_2 x g$. The current-value term
$\E[v\eta\cdot\theta_2 v^2]$ is odd and vanishes, and constants pair with
$\E[v\eta]=0$. Together these give \eqref{eq:ghat}.

For the continuum statement, substitute \eqref{eq:theta2} and let $h\to0$ at
fixed $k$, so that $x\to m_2$ and $g\to h$. Then
$\hat g \to 2hm_2\,(rk - \beta\theta_2)$, and by \eqref{eq:theta2} the factor
$rk-\beta\theta_2$ is bounded away from zero for every $k>0$: the expected
update does not change sign. \qed

\subsection{The scaling limit: derivation of \eqref{eq:MX}}

Set $k = \kappa/h$ and let $h\to0$ with $\kappa$ fixed. To leading order
\[
  F_k \approx \begin{pmatrix} 1-\kappa & h\\ -ch & 1-\gamma h\end{pmatrix},
  \qquad g\eta \approx h\eta, \qquad
  \Sigma_d \approx \mathrm{diag}\bigl(O(h^3),\ \sigma^2 h\bigr).
\]
The $(v,z)$ stationarity equation gives $\Sigma_{vz}\kappa = h\Theta c + O(h^2)$,
using $\Sigma_{zz} = \Theta c + O(h)$; the loop perturbs the unobserved state
only at $O(h)$. The $(v,v)$ equation reads
$m_2\bigl(1-(1-\kappa)^2\bigr) = h^2(\Sigma_{zz}+s_e) + 2(1-\kappa)h\Sigma_{vz}$,
that is $m_2\,\kappa(2-\kappa) = h^2\bigl[s_e + \Theta c\,(2-\kappa)/\kappa\bigr]$.
This is $M(\kappa)$ as stated, and
$x = (F_k)_{11}m_2 + (F_k)_{12}\Sigma_{vz}$ gives $X(\kappa)$. For $N$
components the computation repeats per component. The cross-covariances
$\Sigma_{z_iz_j}$ are $O(h^2)$, because the components couple only through
$v$, which is itself $O(h)$ in this regime. Summing gives $\Theta K(0)$ in
place of $\Theta c$, which is Corollary~\ref{cor:multimode}.

Substituting into the condition $\hat g=0$ with $g\to h$, $\beta\to1$ and
$\qt \to r\kappa^2/h^2$, the $q$ term being smaller by $O(h^2)$, gives
$M^2-X^2 = \kappa XM$.

\subsection{Proof of Theorem~\ref{thm:chi}}

Measure everything in units of $\Theta K(0)$, so that $M = 1/\kappa^2 +
\sigma/(\kappa(2-\kappa))$ and $X = (1-\kappa)M + 1/\kappa$ with
$\sigma = s_e/\Theta K(0)$. The derivation is algebraic throughout.

From the two expressions, $M-X = \kappa M - 1/\kappa$ and
$M+X = (2-\kappa)M + 1/\kappa$. Hence
\[
  (M-X)(M+X) - \kappa XM = \kappa M^2 - \frac{(2-\kappa)M}{\kappa}
  - \frac{1}{\kappa^{2}},
\]
so the condition $M^2-X^2=\kappa XM$ is equivalent to
\begin{equation}\label{eq:cubicM}
  \kappa^{3}M^{2} - \kappa(2-\kappa)M - 1 = 0 .
\end{equation}
Write $M = A/\bigl(\kappa^2(2-\kappa)\bigr)$ with $A := (2-\kappa)+\sigma\kappa$,
which is exactly the definition of $M$. Substituting into \eqref{eq:cubicM}
and clearing denominators gives a quadratic in $A$,
\[
  A^{2} - A(2-\kappa)^{2} - \kappa(2-\kappa)^{2} = 0 ,
\]
whose discriminant is $(2-\kappa)^4 + 4\kappa(2-\kappa)^2 =
(2-\kappa)^2\bigl[(2-\kappa)^2+4\kappa\bigr] = (2-\kappa)^2(4+\kappa^2)$.
Positivity of $A$ selects the upper branch,
$A = (2-\kappa)\bigl[(2-\kappa)+\sqrt{4+\kappa^{2}}\bigr]/2$. Isolating
$\sigma\kappa = A - (2-\kappa)$ gives
$\sigma\kappa = (2-\kappa)\bigl(\sqrt{4+\kappa^2}-\kappa\bigr)/2$, which is
\eqref{eq:chiclosed}.

For the bijection, both factors of $\sigma(\chi)$ are positive on $(0,2)$ and
strictly decreasing. The first is $2-\chi$. For the second,
$\frac{\dd}{\dd\chi}\bigl(\sqrt{4+\chi^2}-\chi\bigr) =
\chi/\sqrt{4+\chi^2} - 1 < 0$. The factor $1/(2\chi)$ is also strictly
decreasing and positive. A product of positive strictly decreasing functions
is strictly decreasing, so $\sigma(\cdot)$ is injective. Its limits are
$\sigma\to\infty$ as $\chi\to0^+$ and $\sigma\to0$ as $\chi\to2^-$, so it
maps $(0,2)$ onto $(0,\infty)$. Existence, uniqueness and simplicity of the
root follow. \qed

\subsection{Asymptotics \eqref{eq:asymptotics}}

Put $\chi = 2-\varepsilon$. Then $\sqrt{4+\chi^2}-\chi \to 2\sqrt2-2$ and
$2\chi\to4$, so $\sigma = \varepsilon(\sqrt2-1)/2 + O(\varepsilon^2)$ and
$\varepsilon = (2\sqrt2+2)\sigma + O(\sigma^2)$. For $\chi\to0^+$ both
$2-\chi\to2$ and $\sqrt{4+\chi^2}-\chi\to2$, so $\sigma = 2/\chi + O(1)$,
that is $\chi = 2/\sigma + O(\sigma^{-2})$. At $\sigma=10^{-4}$ the first
expansion gives $1.99951716$ against $1.99951736$ exactly, and at
$\sigma=10^{4}$ the second gives $2.000\times10^{-4}$ to four digits.

\section{Proof of the cost theorem}\label{app:cost}          

We prove Theorem~\ref{thm:cost}. Let $\zeta$ be stationary, ergodic and
mean-square continuous with $C(0)<\infty$, let $s_e=0$, and let
$k=\kappa/h$ with $\kappa\in(0,2)$ fixed.

\paragraph{Step 1: exact reduction.} The zero-order hold freezes $u$ on each interval, and the observed
coordinate carries no direct diffusion. Integrating
$\dd v/\dd t = \zeta + u_n$ over $[nh,(n+1)h)$ is therefore exact:
$v_{n+1} = a\,v_n + h\,\bar\zeta_n$, with $a := 1-\kappa$ and $\bar\zeta_n$ the
interval average of $\zeta$. The sequence $\{\bar\zeta_n\}$ is stationary and
ergodic, being a measurable factor of $\zeta$. Since $|a|<1$ exactly when
$\kappa\in(0,2)$, the unique stationary solution is
$v_n = h\sum_{j\ge0} a^j \bar\zeta_{n-1-j}$, convergent in $L^2$. Every
second-order property of $v$ is now determined by $C$ alone. This is where
distribution-freeness enters, structurally rather than by approximation.

\paragraph{Step 2: the second moment as a lattice sum.} Write
$\bar R(m) := \E[\bar\zeta_0\bar\zeta_m] = h^{-2}\int_0^h\!\!\int_0^h
C(mh+s-s')\,\dd s\,\dd s'$. Then
$m_2 = h^2\sum_{j,l\ge0} a^{j+l}\bar R(j-l)$. Cauchy-Schwarz gives
$|\bar R(m)|\le C(0)$. Mean-square continuity makes $C$ continuous at the
origin, so for each fixed $m$ we have $\bar R(m)\to C(0)$ as $h\to0$,
uniformly over $|m|\le M$ for any fixed $M$.

\paragraph{Step 3: dominated convergence.} Collecting by diagonal,
$\sum_{j,l} a^{j+l}\bar R(j-l) = \sum_m w_m \bar R(m)$ with
$w_m = a^{|m|}/(1-a^2)$ and $\sum_m|w_m| = (\sum_j |a|^j)^2 < \infty$
independently of $h$. Split the sum at $|m|\le M$. The head converges to
$C(0)\sum_{|m|\le M}w_m$, and the tail is bounded by
$C(0)\sum_{|m|>M}|w_m|$, which tends to zero as $M\to\infty$ uniformly in
$h$. Hence $\sum_m w_m\bar R(m) \to C(0)\bigl(\sum_j a^j\bigr)^2 =
C(0)/\kappa^2$, so $m_2 = h^2 C(0)\kappa^{-2}(1+o(1))$.

\paragraph{Step 4: the three limits.} First,
$\E[u^2] = (\kappa/h)^2 m_2 \to C(0)$. Second, within an interval
$v_t = (1-\kappa t/h)v_n + \int_0^t\zeta$, so
$h^{-1}\int \E[v_t^2]\,\dd t = m_2(1-\kappa+\kappa^2/3) + O(h^2C(0))$, which
is $O(h^2)$ and gives $q\,\E[v^2]\to0$. Therefore $J\to r\,C(0)$. Third,
$u_n = -\kappa\sum_j a^j\bar\zeta_{n-1-j}$ is a weighted average of the
disturbance with unit total mass over a window of length $O(h/\kappa)$. The
same lattice-sum argument applied to the cross term gives
$\E[(u_n+\bar\zeta_n)^2] \to C(0)-2C(0)+C(0) = 0$. The controller converges
in $L^2$ to the instantaneous canceller of the disturbance. \qed

\subsection{Exact finite-$h$ formula for exponential autocovariance}

For $C(\tau) = C(0)e^{-\gamma\tau}$ the interval-average covariances have
closed forms,
\[
  \bar R(0) = C(0)\,\frac{2(\gamma h - 1 + e^{-\gamma h})}{(\gamma h)^2},
  \qquad
  \bar R(m) = C(0)\,\varphi\,b^{\,m}\ (m\ge1),
\]
with $\varphi = \bigl(2\cosh(\gamma h)-2\bigr)/(\gamma h)^2$ and
$b = e^{-\gamma h}$. Summing the lattice series in closed form,
\begin{equation}\label{eq:m2closed}
  m_2 = h^2\left[\frac{\bar R(0)}{1-a^2}
        + \frac{2\varphi\,C(0)\,ab}{(1-a^2)(1-ab)}\right],
  \qquad
  J = q\left[m_2\Bigl(1-\kappa+\frac{\kappa^2}{3}\Bigr)
      + \frac{C(0)h^2}{3}\right] + r\Bigl(\frac{\kappa}{h}\Bigr)^{2} m_2 .
\end{equation}
Expanding \eqref{eq:m2closed} for $\gamma h\ll1$ gives \eqref{eq:Jcorrection}.
The expansion is reliable only when $\gamma h/\kappa$ is small, which is why
Table~\ref{tab:cost} is computed from \eqref{eq:m2closed} rather than from
the truncation. At the two-state Markov process, $\kappa=0.438$ and
$\gamma h/\kappa = 0.11$: the closed form gives $J=0.2785$, which is $7.2\%$
below $rC(0)$ and within $0.5\%$ of the measured $0.2798$, while the
first-order truncation gives $9.9\%$. As a check of the limit itself,
\eqref{eq:m2closed} at $\kappa=1.2$ gives $J = 0.29910$, $0.29976$ and
$0.29995$ at $h = 0.05$, $0.01$ and $0.002$.

The tracking residual is measurable without any further theory. By
Lemma~\ref{lem:reduction}, $\E[(u+\bar\zeta)^2] = \E[\Delta^2]/h^2$ exactly.
Measured at each process's own fixed point at $h=0.05$, it is $0.034$,
$0.019$ and $0.021$ for the two-state Markov, Ornstein-Uhlenbeck and
squared-OU processes. Those are $6$ to $11\%$ of $C(0)$, consistent with the
$O(\gamma h/\kappa)$ size the proof predicts.

\subsection{What the theorem does and does not assume}

Mean-square continuity is used once, in Step 2, and it excludes white noise
disturbances. Every kernel with finite zero-lag intensity satisfies it. The
absence of direct diffusion on $v$ is used in Step 1, and it matters. With a
noise floor $\sigma_v^2$ on the observed coordinate, the same computation adds
terms of order $r\sigma_v^2/h$. That is a separate effect, out of scope here.
Setting $s_e=0$ is the definition of deployment, not a restriction. Finally,
the theorem prices a gain of order $1/h$ without explaining why the gain is
there. Corollary~\ref{cor:cost-at-fp} imports that fact from
\S\ref{sec:law}; it does not reprove it.

\section{Finite sampling period remainders}\label{app:remainders} 

This appendix supports Theorem~\ref{thm:rate}. The family is the
exogenous-disturbance variant of \eqref{eq:sde}, for which
Lemma~\ref{lem:reduction} closes the loop exactly:
$v_{n+1} = (1-\kappa)v_n + h\bar\zeta_n + h\eta_n$. The disturbance is
stationary Gaussian with completely monotone autocovariance. By the
Bernstein representation of completely monotone functions
\citep[Theorem 12a]{widder1946laplace}, $C(\tau) = \int
e^{-\gamma\tau}\mu(\dd\gamma)$ for a positive measure $\mu$, with
$C(0)=\mu(0,\infty)<\infty$. We assume in addition
$L := |C'(0^+)| = \int\gamma\,\mu(\dd\gamma) < \infty$. Gaussianity of the
loop makes Propositions~\ref{prop:critic} and~\ref{prop:update} exact at
every $h$, since $v$ is a linear functional of the Gaussian history. No
Markov structure is needed.

Substituting $m_2 = h^2\hat M$, $x = h^2\hat X$ and $k=\kappa/h$ into
$\hat g = 0$ gives the exact-$h$ equilibrium condition
\begin{equation}\label{eq:Phih}
  \Phi_h(\kappa) := \kappa\bigl(\hat M^2 - \beta\hat X^2\bigr)
   - \beta\kappa^2\hat M\hat X
   - \beta\,\frac{q h^2}{r}\,\hat M\hat X \;=\; 0 ,
  \qquad \beta = e^{-\rho h}.
\end{equation}

\paragraph{Lemma R.} The exploration parts of $\hat M$ and $\hat X$ are
exact at every $h$, because the exploration noise is independent across
steps: the term $s_e/(\kappa(2-\kappa))$ carries no sampling error. The
kernel parts are lattice sums, linear in $C$, with the closed forms of
Appendix~\ref{app:cost} for a single component. Differentiating those closed
forms in $\gamma h$ at the origin and bounding the result by
$\min(D\gamma h, B(\kappa))$ gives, uniformly on compact subsets of $(0,2)$,
\[
  |\hat M - M| \le h\,L\,D_M(\kappa),
  \qquad
  |\hat X - X| \le h\,L\,D_X(\kappa),
\]
with $D_M$ and $D_X$ elementary. The truncation by $B(\kappa)$ is what makes
the bound integrable against every Bernstein measure of finite $L$. The
constants grow without bound only as $\kappa\to0$ or $\kappa\to2$, which is
why the statement is uniform on compact subsets.

\paragraph{Root expansion.} By Lemma R, $\Phi_h\to\Phi$ uniformly on compact
subsets of $(0,2)$, with an error of order
$h\,(L + \rho C(0) + qhC(0)/r)$. The limit $\Phi$ has a simple root
$\chi(\sigma)$, by the strict monotonicity established in
Theorem~\ref{thm:chi}. A simple root is stable under uniform perturbation,
so for $h$ below an explicit threshold the exact-$h$ root satisfies
$\keq h = \chi(\sigma) + c_1 h + O(h^2)$ with
$|c_1| \le C(\sigma)\,(L/C(0) + \rho)$.

\paragraph{Verification, without Monte Carlo.} Every number here comes from
the closed forms above. At $\sigma=0.3$ and $\rho=0.05$, the ratio
$(\keq h - \chi)/h$ converges as $h$ decreases from $10^{-2}$ to $10^{-3}$:

\begin{center}\small
\begin{tabular}{lcccc}
  \toprule
  kernel & $\bar\gamma$ & $h=10^{-2}$ & $h=10^{-3}$ & $c_1$ \\
  \midrule
  single, $\gamma=1$              & $1.00$ & $-0.8927$ & $-0.8911$ & $-0.891$ \\
  single, $\gamma=1.5$            & $1.50$ & $-1.3292$ & $-1.3233$ & $-1.323$ \\
  mixture, $\gamma\in\{0.5,3\}$   & $1.75$ & $-1.5418$ & $-1.5388$ & $-1.539$ \\
  \bottomrule
\end{tabular}
\end{center}

The proof structure predicts that $c_1$ is a linear functional of the
kernel, since the lattice sums are linear in $C$ and $C$ is linear in $\mu$.
The table confirms it. Fitting $c_1 = -0.027 - 0.864\,\bar\gamma$ on the two
single-component kernels predicts $-1.5394$ for the mixture, against
$-1.5388$ computed directly.

\paragraph{Back-coupling.} The full model \eqref{eq:sde} couples the
unobserved state to $v$ through the term $-c\,v$, which the reduced family
omits. At $h=0.05$, $\gamma=1$ and $\sigma=0.3$, all three quantities are
available exactly: the limit $\chi/h = 25.80$, the reduced family at
$\keq = 24.90$, and the full model at $\keq = 24.79$. Kernel averaging
accounts for $-3.5\%$ and back-coupling for a further $-0.44\%$. The
back-coupling term is of the same order in $h$ but roughly eight times
smaller here. Theorem~\ref{thm:rate} therefore holds for the full model with
a modified $c_1$, whose value we verify numerically but do not derive. A
derivation requires the coupled Lyapunov structure and is left open; the
assumptions ledger records this status.

A final decomposition itemizes the error budget of the empirical pipeline
used in \S\ref{sec:scope}. The reduced family gives $24.90$ exactly, the
full model $24.79$, and the simulation pipeline $24.62$. The last step,
$-1.1\%$, is Monte Carlo error together with the substep discretization of
the disturbance.

\section{Exact LSTD($\lambda$) and GAE($\lambda$) expressions}
\label{app:lambda}                                          

This appendix supports Proposition~\ref{prop:lambda}. Write
$b_\lambda := \beta\lambda$ and let $\phi_t=(1,v_t,v_t^2)$. Both the critic
system \eqref{eq:lstdlam} and the expected update require geometric sums of
lag moments, and in the stationary Gaussian loop each such sum is a
resolvent.

The lag covariances are $x_j = e_0^{\!\top} F_k^{\,j}\Sigma e_0$, and the
cross moments between the exploration noise and later observations are
$y_j = s_e\,e_0^{\!\top}F_k^{\,j-1}G$ for $j\ge1$, with $y_0=0$. The three
families of sums that appear are
\[
  \sum_{j\ge0} b_\lambda^j x_j
   = e_0^{\!\top}(I-b_\lambda F_k)^{-1}\Sigma e_0,
  \quad
  \sum_{j\ge0} b_\lambda^j x_j^2
   = (e_0\!\otimes\!e_0)^{\!\top}
     (I-b_\lambda F_k\!\otimes\!F_k)^{-1}(\Sigma e_0\!\otimes\!\Sigma e_0),
\]
together with the analogous Kronecker resolvent for the products $x_jy_j$.
Squared lag moments require the Kronecker resolvent because
$x_j^2 = (e_0\otimes e_0)^{\!\top}(F_k\otimes F_k)^j(\Sigma e_0\otimes\Sigma e_0)$.
No truncation is involved at any point.

With these, the centred $v^2$ row of the LSTD($\lambda$) system gives
\begin{equation}\label{eq:theta2lam}
  \theta_2(\lambda) \;=\;
  \frac{h\,\qt\,\sum_{j\ge0} b_\lambda^j\,x_j^2}
       {\sum_{j\ge0} b_\lambda^j\,x_j^2 - \beta\sum_{j\ge0} b_\lambda^j\,x_{j+1}^2},
\end{equation}
which reduces to \eqref{eq:theta2} at $\lambda=0$. The expected update
combines the one-step term of \eqref{eq:ghat} with the tail
\[
  \hat g_\lambda = \hat g_0
  - \frac{1}{s_e}\sum_{j\ge1} b_\lambda^{\,j}
    \Bigl[2h\qt\,x_jy_j + \theta_2(\lambda)
    \bigl(2\beta x_{j+1}y_{j+1} - 2x_jy_j\bigr)\Bigr],
\]
each sum again evaluated by the resolvents above. Two checks were run. At
$\lambda=0$ the expressions reproduce Propositions~\ref{prop:critic}
and~\ref{prop:update} to machine precision. At $\lambda=0.9$ and $k=5$ the
resolvent sums agree with brute-force truncated sums over $400$ lags to
$10^{-12}$ relative. The implementation is \texttt{lstd\_lambda.py}.

The sharp-transition asymptotics, meaning the joint limit $\lambda\to1$ with
$\gamma h\to0$ that would turn the heuristic of \S\ref{sec:lambda} into a
theorem, are open.

\section{Experimental details}\label{app:experiments}        

\paragraph{Environment.} The simulator implements the family \eqref{eq:sde}
with exact zero-order-hold stepping. For each instance it also computes the
exact baselines $\Jopt$, $(\kstar,\Jml)$ and the fixed-point predictions
$(\keq, J(\keq))$ from the closed forms of Appendices~\ref{app:proofs}
and~\ref{app:cost}, evaluated at the exploration variance actually used. A
Gym-compatible wrapper exposes the same dynamics to external agents, and a
privileged-state interface exposes the unobserved coordinate for the
full-state control of \S\ref{sec:mechanism}.

\paragraph{Deployed evaluation.} Deployment sets $s_e=0$ and runs the
trained policy deterministically. The cost is the sample average of the
$h$-normalized cost rate after burn-in, taken on a single long episode. An
earlier version averaged over short episodes and inherited a bias from the
resets; the single-episode estimate removes it. The implied gain is the
regression coefficient of $-u$ on $v$ over the same trajectory, restricted
to $|v|>0.02$ to avoid dividing by near-zero observations.

\paragraph{Calibration of the expected update.} The claim in
\S\ref{sec:mechanism} that \eqref{eq:ghat} is the expectation of the actual
stochastic update, including its scale, was checked directly. With the
critic held at its fixed point, the sample average of
$\delta_n(-\eta_n v_n/s_e)$ over $1.5\times10^5$ steps was compared with
$\hat g$ at $k\in\{1.65,3,6\}$, for three independent seeds. The ratio of
sample average to $\hat g$ lies in $[1.006,1.028]$ across all nine runs, with
no trend in $k$. The offset is Monte Carlo error at this sample size, not a
scale mismatch.

\paragraph{Locating fixed points for non-Gaussian disturbances.} For the five
processes of \S\ref{sec:scope} no closed form applies, so the fixed point is
located empirically. We simulate the loop at fixed $k$, fit the critic by
least squares on the simulated trajectory, form the sample expected update,
and bisect in $k$. Run at the Gaussian instance, where the exact answer is
known, the pipeline reproduces $\keq$ to $0.7\%$. Bisection resolution on the
reported values is $\pm0.4$ in $\keq$. The two squared-OU processes are
heavy-tailed. Their cost residuals in Table~\ref{tab:cost} are consistent
with slower Monte Carlo averaging at equal sample size, rather than with a
failure of the theory. A longer run is the natural check.

\paragraph{Predictions recorded in advance.} The qualitative outcomes of the
symmetric fourth-moment process and of the symmetrised pair in
\S\ref{sec:scope} were written down before those runs were executed. The
five-process design was chosen to break the Gaussian identity
$\mathrm{Cov}(\zeta_0^2,\zeta_\tau^2)=2C(\tau)^2$, not selected after the
fact from a larger set.

\paragraph{Deep agent.} The agent is a public reference implementation of
PPO \citep{huang2022cleanrl} at a fixed version, with multilayer-perceptron
actor and critic, generalized advantage estimation, a clipped surrogate
objective and minibatch epochs. Frame-stack and fixed-exploration modes are
command-line flags on the same file, so the arms of each contrast differ only
in those flags. With full-state input the same agent recovers the known
optimum, which is the internal control on the harness. Runs use $10^7$ steps
and five seeds per configuration, on a scheduler array over configurations
and seeds.

\paragraph{Frozen constants.} The decision rules of \S\ref{sec:experiments}
were committed before submission with the following values.

\begin{center}\small
\begin{tabular}{ll}
  \toprule
  constant & value \\
  \midrule
  window for holding the solution & $J\in[0.20,\ 0.27]$ and implied gain $\le 5$ \\
  high-gain regime & implied gain $>5$ sustained $\ge 10^{6}$ steps \\
  divergence & $J > 0.45$ \\
  seed majority & $3$ of $5$; a mixed row is reported as mixed \\
  validity control, learned $\sigma$ & must reach $J\ge0.29$ and gain $\ge7$ \\
  validity control, $\lambda=0.99$ & must hold the solution \\
  diagnostic arm & learning rate divided by $4$ \\
  \bottomrule
\end{tabular}
\end{center}

The variance sweep of \S\ref{sec:experiments} fixed $\sigma^2 \in
\{0.001,0.003,0.01,0.03\}$, with the brackets stated there recorded in the
same commit.

\paragraph{Code, data, and what the record does and does not establish.} The
harness, the analysis scripts, the figure generator and the recorded
predictions are at
\url{https://github.com/idilgozel/belief-critic-restoration}. The
predictions and decision rules of \S\ref{sec:experiments} were written down
and committed locally before the corresponding batches were submitted, and
the values reproduced in this appendix are those committed values. The
published repository is a single snapshot rather than the original history,
so it does not by itself let a reader verify that ordering. Readers should
treat the protocol as a statement by the author about how the work was
conducted. Two things support it: every constant is reported here verbatim,
and the code needed to rerun the tests is public. The
substantive results do not rest on the ordering. Both registered predictions
failed by wide margins, and a reader who distrusts the timeline can rerun the
tests against the constants listed above and reach the same verdicts.

\paragraph{Earlier frame-stack result.} An in-house implementation of the
same algorithm, run for $4\times10^6$ steps at $\lambda=0.9$, reported
frame-stack agents reaching the class optimum. The reference implementation
does not reproduce this at $10^7$ steps, and \S\ref{sec:memory} reports the
latter. We record the discrepancy rather than resolve it. The two
implementations differ in initialization, in optimizer defaults and in
observation normalization, and we have not isolated which difference is
responsible.

\section{Assumptions ledger}\label{app:assumptions}          

\begin{center}\small
\begin{tabular}{lll}
  \toprule
  statement & scope & status \\
  \midrule
  Lemma~\ref{lem:reduction} (reduction) & any disturbance, exact at every $h$ & proven \\
  Lemma~\ref{lem:unitroot} (mean reversion) & stationarity only & proven \\
  Proposition~\ref{prop:critic} (critic) & Gaussian loop, exact at every $h$ & proven \\
  Proposition~\ref{prop:update} (update) & Gaussian loop, exact at every $h$ & proven \\
  Theorem~\ref{thm:chi} ($\chi$ law) & scaling limit, Gaussian & proven \\
  Theorem~\ref{thm:cost} (cost) & any stationary ergodic m.s.\ continuous & proven \\
  Corollary~\ref{cor:cost-at-fp} (cost at $\keq$) & as above, given $\keq h\to\chi$ & proven \\
  Corollary~\ref{cor:multimode} (components) & finite Gaussian mixture & proven \\
  Theorem~\ref{thm:rate} (rate) & reduced family, completely monotone $C$ & proven \\
  rate constant for full \eqref{eq:sde} & default instance & verified, proof open \\
  Proposition~\ref{prop:lambda} ($\lambda$) & Gaussian loop, exact at every $h$ & proven \\
  transition location in $\lambda$ & heuristic across $\gamma$, $h$ & numerical \\
  Conjecture~\ref{conj:nongaussian} & non-Gaussian disturbances & conjecture \\
  iterates follow the expected field & default instance, six seeds & numerical \\
  \bottomrule
\end{tabular}
\end{center}

The exactness of Propositions~\ref{prop:critic} and~\ref{prop:update}
requires a linear policy with Gaussian exploration, exact zero-order-hold
sampling, the stationary regime, and the feature class $(1,v,v^2)$. It also
requires expected-update semantics for the actor-critic recursion, which is
the standing limitation discussed in \S\ref{sec:limitations}.
Theorem~\ref{thm:chi} additionally takes $h\to0$ with $\beta\to1$. Nothing
anywhere assumes small coupling $c$, any relation between $\gamma$ and the
cost weights, or knowledge of the kernel by the agent.

\end{document}